\newcommand{\qed}{\jmlrBlackBox}
\renewcommand{\phi}{\varphi}
\renewcommand{\S}{\mathcal{S}}
\newcommand{\A}{\mathcal{A}}
\newcommand{\M}{\mathcal{M}}
\DeclareMathOperator{\Dists}{\mathcal{M}_1}
\newcommand{\bR}{\mathbb{R}}
\DeclareMathOperator{\poly}{poly}
\definecolor{emerald}{rgb}{0.31, 0.78, 0.47}
\newcommand{\absg}[1]{\left|#1\right|}
\newcommand{\abs}[1]{|#1|}
\newcommand{\E}{\mathbb E}
\newcommand{\EE}[1]{\mathbb E[#1]}
\newcommand{\EEg}[1]{\mathbb E\left[#1\right]}
\newcommand{\ip}[1]{\langle #1 \rangle}
\newcommand{\bip}[1]{\left\langle #1 \right\rangle}
\newcommand{\ipg}[1]{\bip{#1}}
\newcommand{\norm}[1]{\|#1\|}
\newcommand{\R}{\mathbb{R}}
\newcommand{\N}{\mathbb{N}}
\newcommand{\cA}{\mathcal{A}}
\newcommand{\cM}{\mathcal{M}}
\newcommand{\cC}{\mathcal{C}}
\newcommand{\cF}{\mathcal{F}}
\newcommand{\cH}{\mathcal{H}}
\newcommand{\cP}{\mathcal{P}}
\newcommand{\cR}{\mathcal{R}}
\newcommand{\cW}{\mathcal{W}}
\newcommand{\cS}{\mathcal{S}}
\newcommand{\cT}{\mathcal{T}}
\newcommand{\cX}{\mathcal{X}}
\newcommand{\one}[1]{\mathbb{I}\{#1\}}
\newcommand{\ass}
{\stepcounter{equation}
\tag{{\color{red} A\theequation}}}
\DeclareMathOperator{\supp}{supp}
\renewcommand{\epsilon}{\varepsilon}
\newcommand{\floor}[1]{\left\lfloor {#1} \right\rfloor}
\newcommand{\zeros}{\mathbf{0}}
\DeclareMathOperator*{\argmax}{arg\ max}
\newcommand{\dhpow}[1]{\left(\frac{1-\gamma}{2\gamma}\right)^{#1}}
\newcommand{\dhpows}[1]{\left(\tfrac{1-\gamma}{2\gamma}\right)^{#1}}
\newcommand{\SA}{\cS \times \cA}
\newcommand{\stin}{s_{\text{in}}}
\newcommand{\bbP}{\mathbb{P}}
\newif\ifsup\suptrue
\newtheorem{assumption}[theorem]{Assumption}
\title[Exponential Lower Bounds for Planning in MDPs]{Exponential Lower Bounds for Planning in MDPs With Linearly-Realizable Optimal Action-Value Functions}
\begin{document}

    \maketitle

\begin{abstract}
We consider the problem of local planning in fixed-horizon and discounted Markov Decision Processes (MDPs) 
with linear function approximation and a generative model
under the assumption that the optimal action-value function lies in the span of a feature
map that is available to the planner. Previous work has left open the question of whether there exist sound planners that need only $\poly(H,d)$ queries
regardless of the MDP, where $H$ is the horizon and $d$ is the dimensionality of the features.
We answer this question in the negative: we show that any sound planner must query at least
$\min(e^{\Omega(d)},\Omega(2^H))$ samples in the fized-horizon setting and $e^{\Omega(d)}$ samples in the discounted setting.
We also show that for any $\delta>0$, the least-squares value iteration algorithm with
$\tilde{\mathcal{O}}(H^5 d^{H+1}/\delta^2)$ queries can compute a $\delta$-optimal policy in the fixed-horizon setting.
We discuss implications and remaining open questions.
\end{abstract}

\section{Introduction}

Much research in the theory of planning (and learning) in large-scale Markov Decision Processes (MDP)
with function approximation 
revolves around the conditions that are necessary for query-efficient planning.
In the setting we consider, a planner is given access to a feature map that maps state-action pairs to $d$-dimensional (feature) vectors, and
 interacts with a simulation model (also known as a generative model) of the MDP to find a ``good action'' at any given state.
The planner is promised that the unknown optimal action-values at any state-action pair can be written as the inner product between the features at that state and an unknown parameter vector. 
As is well known, actions that are optimal to take in a state are those that maximize
the optimal action-values at the state.
Thus, if the planner can compute a good approximation to the unknown parameter vector, it could return a good action at the initial state (and perhaps even at all the states).
The hope then is that regardless the size of the state and action spaces, a planner may be able to find a good action while only interacting with the simulator $\mathcal{O}(\poly(H,d))$ times, where $H$ is the horizon of the MDP.

Much has been written about planning (and learning) in the presence of (linear) function approximation with additional assumptions. Yet, the basic question:
\begin{center}
\emph{Is realizability of the optimal action-value function enough to guarantee query-efficient learning?}
\end{center}
had remained unanswered so far.
In this paper, we answer this question in the negative by proving that 
for any $\eta\in \left(0,\frac12-\frac2{\log_2(d-1)}\right]$,
the worst-case
query-complexity is at least
\begin{align*}
\Omega
\left( 
	\min\left\{ 
		e^{(d-1)^{2\eta}/8} , 
		2^{-H} d^{H\left(\frac{1}{2}-\eta\right)}
		\right\}
\right)\,.
\end{align*}
In particular, this implies a lower bound of $\min(e^{\Omega(d)},\Omega(2^H))$. Thus, in general, the query complexity of a planner which returns a ``good'' solution may be exponential in $d$ or exponential in $H$ (for a precise statement of this result, see Theorem~\ref{thm:lb}). Since our results hold under the generative setting, by extension, they also apply to the more general online setting (where a simulator is not given). 
Our proof also translates into the discounted MDP setting, where we prove a query complexity of $e^{\Omega(d)}$. For ease of presentation, this paper primarily examines the fixed-horizon setting, presenting the relevant changes to prove the lower bound for the discounted MDP setting in Section~\ref{sec:discounting}.

Inspired by the recent work of \citet{Du_Kakade_Wang_Yan_2019}, our lower bound construction uses the Johnson-Lindenstrauss lemma \citep{johnson1984extensions} to create a large set of nearly-orthogonal feature vectors. At a high-level, each state of our construction has this large set of features available as actions (see Figure~\ref{fig:mdp-illustration}). The parameter of the optimal action-value function is then chosen to be the feature of one of these actions -- by realizability this entails that one action per state will have higher value while the others are nearly identical. At the final stage of the MDP, this gap between the optimal action and its non-optimal counterparts will be exponentially small. The gap is then increased multiplicatively as the planner approaches the initial stage. There are additional subtleties needed to ensure realizability, which distinguishes our approach from the bandit-like lower bound construction of \citet{Du_Kakade_Wang_Yan_2019}. 

We note that our lower bound construction is deterministic, save for random rewards obtained at the last time-step of each episode. This is in contrast to the result of \citet[Theorem 1]{Wen_Roy_2013}, which establishes that for fully deterministic MDPs, 
there exists a planner that chooses only at most $d$ times suboptimal actions out of any number of episodes.
Together with our construction,
this suggests a stark separation between planning/learning
 in deterministic and in stochastic environments. 

\subsection{Related work} 
The idea of using function approximation to help solving large-scale
MDPs originates in the 1960s \citep[e.g.,][]{BeKaKo63}:
These early works provided experimental evidence that in MDPs with large (or even infinite) state spaces, the optimal value function can be well approximated with the linear combination of a few basis functions, 
which in turn
encouraged work to explore how such basis functions could be used
to design efficient planning algorithms whose compute cost is
\emph{independent of the size of the state space}
and depends mildly on the number of basis functions and the planning horizon.
The seminal paper of \citet{SchSei85} gave general, ``least-squares'' versions of the basic dynamic 
programming methods (value iteration, policy iteration and linear programming) that relied on
the basis functions. 
However, no analysis was provided.

In an independent line of work, 
\citet{kearns2002sparse} noticed that  
if a planner that is given simulator access to the MDP is asked to return a good action 
only at some fixed state (provided as part of the input to the planner),
the query (and computational) complexity of planning can already 
be made independent of the size of the state space.
However, without further extra structure (such as the presence of helpful basis functions), 
the query complexity of local planning turns out to be exponential in the planning horizon \citep{kearns2002sparse}. 

Merging the two directions of research gives rise to the central question 
of efficient planning in MDPs, namely 
whether the aforementioned exponential dependence can be
avoided by assuming the presence of ``helpful'' basis functions. 
When the action-value function of \emph{all} policies
are well-represented by some linear combination of the basis functions,
``fitted policy iteration''-type algorithms have recently been shown to achieve
polynomial query (and computational) complexity 
\citep{YW19,LaSzeGe19}. 
These results extend to the case when the basis functions induce a worst-case
 approximation error $\epsilon>0$ over the policies, as long as the 
 planner is required to return $\mathcal{O}(\sqrt{d}\epsilon)$-optimal policies only.
Interestingly, demanding $\mathcal{O}(\epsilon)$-optimal policies worsens the query complexity
to be \emph{exponential}, namely to $\Omega(e^{\Omega(d)}\wedge 2^H)$
\citep{Du_Kakade_Wang_Yan_2019}. %

While these results give a (nearly) complete characterization of the query and computational complexity
under the said assumption, they left open the question whether a similar result may hold true 
under the milder assumption that only the optimal value function is well-approximated by the basis functions.
Positive results for this question have been obtained under a number of additional assumptions. In the online setting, \citet{Wen_Roy_2013} provides a low-regret guarantee for deterministic MDPs  (when both the rewards and transitions are deterministic). %
This was later extended to ``low-variance'' MDPs by \cite{Du_Luo_Wang_Zhang_2019} under an additional gap assumption, which requires knowledge of the minimum separation in value between any optimal action and the second best action. Their sample complexity also scales in the inverse gap. %
Further positive results have been obtained for MDPs with a linear reward function and a low-rank transition matrix \citep{Jin_Yang_Wang_Jordan_2019} (a condition which entails the above assumption of \citet{LaSzeGe19}),
and for MDPs with low ``Bellman ranks'' \citep{Jiang_Krishnamurthy_Agarwal_Langford_Schapire_2017}. 
In the planning setting, \citet{Du_Kakade_Wang_Yan_2019} give a query complexity result for least-squares value iteration which scales as $\mathcal{O}(\poly(H,d))$ provided that the inverse gap is treated as a fixed parameter and is itself $\mathcal{O}(\poly(H,d))$, which is a restrictive condition. Additionally, \citet{Sharriff_Szepesvari_2020} obtains polynomial bounds under $q^\star$-realizability with the additional assumption that the features for all state-action pairs are inside the convex hull of the features at $\mathcal{O}(\poly(H,d))$ state-action pairs. 

Lastly, we highlight the concurrent work of \cite{wang2020statistical}, which establishes a similar exponential lower bound for the analogous question in the setting of \textit{offline RL}. \footnote{In offline RL, the agent is not allowed to interact with the environment but is instead given a dataset of transition tuples drawn from a certain distribution.} In their setting, they assume linear-realizability and an additional assumption related to the data-generation process. While their construction does not work in the generative setting, there are a number of interesting similarities, notably a scaling phenomenon which geometrically reduces values throughout the horizons.

The rest of the paper is organized as follows. 
In the next section, we provide preliminary definitions and introduce notation. %
Section~\ref{sec:locplanning} introduces the formal problem definition, which we call local planning under $q^\star$-realizability with linear function approximation. The exponential lower bound, together with its proof, is presented in Section~\ref{sec:lower}. 
Section~\ref{sec:upper} shows that a least-squares value iteration with $\mathcal{O}(H^5d^{H+1}/\delta^2)$ queries is able to guarantee $\delta$-optimal policies. The paper is concluded with a discussion of the results and the remaining open problems Section~\ref{sec:disc}.

\section{Preliminaries}\label{sec:prelim}
Let $\R$ denote the set of real numbers, and for a positive integer $i$, let $[i] = \{1,\dots,i\}$ be the set of integers from $1$ to $i$. We let $\N_+ = \{1, 2, \dots \}$ denote the set of positive integers.
We write $\Dists(\cX)$ for the set of probability measures on a measurable space $(\cX,\cF)$ where $\cF\subset 2^{\cX}$ is a $\sigma$-algebra over $\cX$. Random quantities are denoted by capital letters, but some capital letters denote non-random quantities.
We let $a\wedge b = \min(a,b)$.
For a probability measure $\mu$ over a topological space, we let $\supp(\mu)$ denote its support.
We use $\norm{f}_\infty = \sup_{x\in \cX} |f(x)|$ 
to denote the supremum norm of a function $f:\cX \to \R$.
We use $\one{A}$ to denote the indicator of a set $A$.

We consider MDPs given by a tuple $M=(\cS,\cA,Q)$, where $\cS$ and $\cA$ are the set of states and actions, respectively, $Q=(P_a: \cS \rightarrow \Dists(\R\times\cS))_{a\in \cA}$ are Markov transition kernels from $\cS$ to $\R\times \cS$ \citep{Put94}.
We assume that $\cS$ is a measurable subset of some Euclidean space,%
\footnote{
We only need a restriction on what these sets are so that we can refer to the set of all MDPs concerned.
Clearly, we could allow much more generality here.
This matters for the upper bound: for the lower bound finite sets are sufficient.
}
while we assume that $\cA$ is finite. 
An MDP describes an environment that an agent interacts with in a sequential manner in discrete time steps. %
At time $t$, the agent observes the state $S_t\in \cS$, takes an action $A_t$ to transition to state $S_{t+1}$ while incurring the reward $R_{t+1}$ where $(S_{t+1},R_{t+1}) \sim Q_{A_t}(\cdot|S_t)$. 

In the fixed, finite-horizon setting, the agent-environment interaction happens in episodes, with each episode lasting for $H>0$ steps. For simplicity, we consider the variant where every episode starts with a fixed initial state $s_1\in \cS$. %
The goal of the agent is to maximize the total expected reward incurred in the episodes. 
Let $r_a(s) = \int Q_a(dr,\cS|s) r$ and $P_a(ds'|s) = Q_a(\R,ds'|s)$.
For $h\in[H]$, define $\cS_h$ as the set of states accessible from $s_1$ in $h-1$ steps.
Thus, $\cS_1 = \{ s_1\}$ and $\cS_{h+1} = \{ s'\in \cS \,:\, \exists a\in \cA, s\in \cS_h \text{ s.t. } s'\in \text{supp} P_a(\cdot|s) \}$. 

A memoryless %
policy in this setting takes the form $\pi = (\pi_h)_{h\in [H]}$ where $\pi_h: \cS_h \rightarrow \Dists(\A)$ is a probability kernel from the states in $\cS_h$ to actions. 
Given $h\in [H]$ and a state $s\in \cS_h$, the value 
 $v^\pi_h(s)$ of $\pi$ for stage $h$ and state state $s$ 
is defined to
be the total expected reward incurred when $\pi$ is deployed beginning at stage $h$
 from state $s$.
Denoting by $\E_\pi$ the expectation operator induced over state-action sequences $(S_1,A_1,\dots,S_H,A_H)$ by the interconnection of $\pi$ and $M$, formally we have
\begin{align*}
v^\pi_h(s) = \E_\pi\left[\sum_{t=h}^H r_{A_t}(S_t) \mid S_h = s \right]\,,
\end{align*}
 We call $v^\pi = (v^\pi_h)_{h\in [H]}$ the \textbf{value function} of policy $\pi$. The value of the \textit{optimal} policy $\pi^\star$ is found to satisfy:
\begin{align}
q^\star_h(s,a) &= r_a(s) + \int v^\star_{h+1}(s') P_a(ds'|s)  \,,\label{eq:bellman1}\\
v^\star_h(s) &= \max_{a\in \cA} q^\star_h(s,a)\,\label{eq:bellman2},
\end{align} 
with 
\begin{align*}
v^\star_{H+1}(s) = 0\,, \qquad s\in \cS_{H+1}\,.
\end{align*}
Here, $v^\star=(v^\star_h)_{h \in [H]}$ is called the optimal value function and $q^\star = (q^\star_h)_{h \in [H]}$ is called the optimal action-value function
(we exclude $v_{H+1}^\star$, as this is identically zero). 
Without loss of generality, we will assume that $\cS = \cup_{h=1}^{H+1} \cS_h$.
An optimal $H$-horizon behavior for an agent in the MDP is one where, for each stage $h\in [H]$ of an episode, 
for each state $s\in \cS_h$, the agent takes an action that maximizes $q_h^\star(s,\cdot)$.
Further, the solution to the above equations is unique \citep{Put94}.

We will also need the notion of the \textbf{action gap} of an action $a$: At stage $h \in [H]$ and 
state $s\in \cS_h$, the action gap of action $a$ is defined to be $\Delta^\star_h(s,a) = v^\star_h(s) - q^\star_h(s,a)$.
When needed, we use $\Delta^\star_{h,M}$ 
to indicate the dependence of the action gap on the MDP $M$.
Finally, for $f:\SA\to\R$ and a memoryless policy $\pi$, we will use $f(s,\pi)$ as the shorthand for $\sum_{a\in \cA} \pi(a|s) f(s,a)$.
Similarly, we will also use $P_{\pi}$ to denote the Markov transition kernel defined using 
$P_{\pi}(ds'|s) = \sum_{a\in\cA} \pi(a|s) P_a(ds'|s)$.

\section{Planning with features realizing $q^\star$}

In this section we introduce the formal problem definition. We consider the \emph{local planning problem} in the presence of an MDP simulator (or ``generative model''
 \citealt{kearns2002sparse}) and a feature map $\varphi=\left(\varphi\right)_{h\in[H]}$, $\varphi_h:\cS\times \cA \to \R^d$, that, for some stage $h$, maps state-action pairs to $d$-dimensional vectors with some finite $d>0$.
In the local planning problem, the planner is given a stage $h \in [H]$ and an input state $\stin \in \cS_h$, and must return a (possibly random) action $A \in \cA$. Roughly, we ask that $A$ has low suboptimality as measured by the action gap (a more precise statement is postponed to Definition \ref{def:sound}).
The planner has access to the set of states, the decomposition $\cS = \cup_{h=1}^{H+1}\cS_h$,
 the set of actions, and the feature map. Furthermore, they are promised that $q^\star$ lies in the span of the features. This promise is captured by the following assumption:
\begin{assumption}[Realizability]\label{ass:realizability}
There exists a vector $\theta^\star\in \R^d$ such that for any $h\in [H]$ and state-action pair $(s,a)\in \cS_h \times \cA$, %
\begin{align}
q^\star_h(s,a) = \ip{ \varphi_h(s,a), \theta^\star }\,.
\label{eq:realizability}
\end{align}
If this holds, we say that the features $\phi$ \textit{realize} the optimal action-value function $q^\star$.
\end{assumption}
\begin{definition}\label{def:realizable-mdp}
A pair $(M,\phi)$ is called $q^\star$-realizable if $M$ is an MDP whose optimal action-value function $q^\star$ is realized by the feature map $\phi$.
\end{definition}

We further assume that the MDP has rewards bounded by $[0,1]$ almost surely: 
\begin{assumption}[Bounded rewards]\label{ass:bounded-mdp}
For any time $t$, the reward $R_{t+1}$ received after taking any action in any state, $\bbP(R_{t+1}\not\in [0,1])=0$.
\end{assumption}

\begin{definition}\label{def:bounded-realizable-mdp}
We write $\mathcal{M}_{H,d}$ for the set  of MDP-feature map pairs $(M,\phi)$ where $M$ has bounded rewards, the horizon is $H$, $\phi$ has dimension $d$, and $(M,\phi)$ is $q^\star$-realizable.
\end{definition}

\paragraph{Interaction between the planner and the simulator, implemented policy}
\label{sec:locplanning}
\begin{figure}[tb]
\begin{center}
\includegraphics[width=0.75\textwidth]{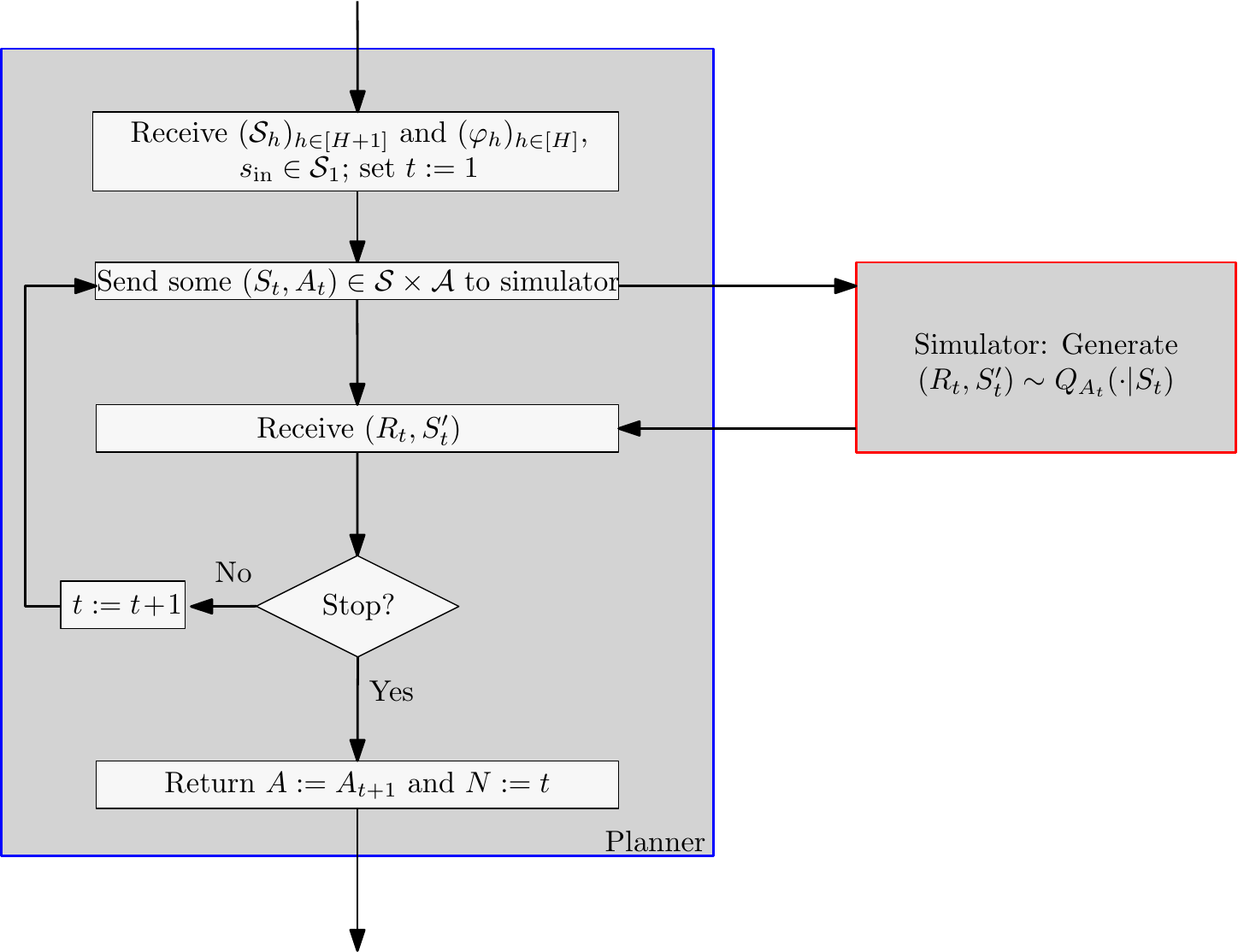}
\end{center}
\captionsetup{justification=centering}
\caption{Data flow between the planner and the simulator}
\label{fig:int}
\end{figure}
A planner is given a stage $h\in [H]$ and an input a state $\stin\in \cS_h$. The planner can then access the MDP $M= (\cS,\cA,Q)$ by sending \emph{queries} to the underlying simulator. A query is defined to be a state-action pair $(s,a)\in \SA$. Given the query $(s,a)$, the simulator returns a pair $(R,S)\sim Q_a(\cdot,\cdot|s)$. 
Based on the result, the planner can then send in a new state-action pair, and so on and so forth.
Thus, in general, the query will be random, as it depends on (random) data received previously by the planner from the simulator.  For the sake of generality, we also allow the planner to inject extra randomness into the planning process.
The planner eventually must decide to stop (say after $N<\infty$ queries) 
and output an action $A\in \cA$.
Formally, a planner together with $\phi$, $\cS\times \cA$, $\stin$ and $h$ determines two sequences of probability kernels,
$\rho = (\rho_t)_{t\ge 1}$ and $\lambda = (\lambda_t)_{t\ge 1}$, 
where $\rho_t$ is a probability kernel from $\cH_{t-1}:=(\cS \times \cA \times \R \times \cS \times \{0,1\})^{t-1}$ to $\cS \times \cA$,
while $\lambda_t$ is a probability kernel from $\cH_{t-1}\times (\cS \times \cA \times \R \times \cS)$ to $\{0,1\}$.
For $t\ge 1$, $\lambda_t$
determines whether the planner should continue with the queries. Without loss of generality, we use $1$ to denote the choice that the planner continues.
Similarly, for $t\ge 1$, $\rho_t$ determines the query to be used in step $t$ if the planner has not stopped in time step $t-1$, and if it has just stopped, the action component sampled from $\rho_t$ determines the action to be returned by the planner. 
The interaction between the planner and the simulator is shown in \cref{fig:int}.
In general, both the returned action $A$ and the number of queries $N$ are random. 
A planner is well-formed if $N<\infty$ holds with probability one for all MDPs that the planner is designed to interact with.

When the planner is well-formed, $A$ is well-defined.
We write $\pi_h(\cdot|\stin)$ for the distribution of $A$. These distributions altogether define a memoryless policy $\pi = (\pi_h)_{h\in [H]}$: this is the policy that will be followed if the planner is called sequentially on the trajectory where the actions taken are those chosen by the planner.
Thus, $\pi$ is the policy that is effectively \emph{implemented} by the planner.
Note that the planner does not return $\pi$, nor does it return $\pi_h(\cdot|\stin)$.
In particular, when the action set is large, $\pi_h(\cdot|\stin)$ may be too large to even write down, let alone $\pi$. Note also that $\pi_h(\cdot|\stin)$ itself is not a random variable as this distribution already accounts for the randomness of the simulator and that of the planner.

\paragraph{The probability distribution induced over interaction sequences}
\label{sec:inducedpd}
In what follows we will need a probability distribution $\bbP$ over the infinitely long interaction sequences $\cH = (\cS \times \cA \times [0,1] \times \cS \times \{0,1\})^{\N_+}$ 
that the interconnection between a planner and a simulator induces. 
Denote by $(S_1,A_1,R_1,S_1',C_1, S_2,A_2,R_2,S_2',C_2, \dots )\in \cH$ such a sequence (formally, these are the coordinate functions from $\cH$ to the respective spaces)
and for $t\ge 1$ 
introduce the abbreviation $H_{t-1} = (S_1,A_1,R_1,S_1', C_1, \dots,  S_{t-1},A_{t-1},R_{t-1},S_{t-1}', C_{t-1})$.
The induced probability distribution $\bbP$ satisfies the following properties: 
For any $t\ge 1$, $\bbP$-almost surely,
\begin{itemize}
\item Next query follows $\rho_t$: $\bbP(\, (S_t,A_t)\in \,\cdot\, | H_{t-1} ) = \rho_t(\,\cdot\,|H_{t-1})$;
\item Data received follows $Q$:   $\bbP(\, (R_t,S_t')\in \,\cdot\, | H_{t-1},S_t,A_t ) = Q_{A_t}(\,\cdot\,|S_t)$;
\item Continuation follows $\lambda_t$: $\bbP( C_t = 1 | H_{t-1},S_t,A_t, R_t,S_t') = \lambda_t(1|
H_{t-1},S_t,A_t, R_t,S_t')$.
\end{itemize}
Note that with this notation, $N = \min_{t\ge 1} \one{C_t=0}$. Also, to save on notation, 
without loss of generality,
we let the output of the planner 
be $A_{N+1}$, so the distribution of the output is also determined by $\rho$.
Since the planner takes as input $\phi$, $\stin$ and $h$, $\bbP$ also depends on these quantities, in addition to depending on the MDP $M= (\cS,\cA,Q)$.
We use $\bbP_{M,\phi,\stin,h}^{\cP}$ 
to signify this dependence when needed.
We will also $\E_{M,\phi,\stin,h}^{\cP}$ to denote the corresponding expectation operator.
The Ionescu-Tulcea theorem %
guarantees that $\bbP$ with these properties exist (see 
\citealt{Ion49}, or Theorem 6.17 in  the book of \citet{Kal06}).
Finally, note that we defined $\bbP$ for sequences of infinite length regardless of the planner by assuming that $\rho$ is defined for arbitrary histories, including those that have $C_t=0$ possibly multiple times. Of course, there is no loss of generality here: statements concerning soundness and query complexity of planners are only concerned with histories up to the first time when $C_t=0$. However, for our proofs, it will be convenient for probabilities to be assigned to events formed of sequences of infinite length. Note that with this notation, we have that for any $a\in \cA$,
$\pi_h(a|\stin)=\bbP_{M,\phi,\stin,h}^{\cP}(A_{N+1}=a)$ .

\paragraph{Sound planners and query cost}
The goal of the planner is to arrive at a policy $\pi$ that is nearly optimal regardless of stages and states.
We define the \emph{suboptimality gap} of a policy $\pi$ by $$\delta^\pi:=\sup_{h\in [H],s\in \cS_h} \{v_h^\star(s)-v_h^\pi(s)\}.$$
When the dependence on the MDP is needed, we use $\delta^\pi_M$ instead of $\delta^\pi$. The following definition introduces two notions of soundness for the policies implemented by planners.
 
 \begin{definition}[$\delta$-soundness and $(\delta,\zeta)$-transition-soundness]\label{def:sound}
 Recall that $\mathcal{M}_{H,d}$ is the set of $q^\star$-realizable and bounded $(M,\phi)$ pairs with horizon $H$ and dimension $d$ (Definition~\ref{def:bounded-realizable-mdp}).%

 Let $\cP$ be a planner and $\pi_{\cP,M,\phi}$ be the policy which is implemented by the planner when given $\varphi$ and upon interaction with the simulator of $M$. Let $\delta \geq 0$ and $\zeta \in [0,1]$.
 \begin{itemize}
     \item We say $\cP$\textit{ is $\delta$-sound} if for \emph{any} $(M,\phi) \in \cM_{H,d}$, the suboptimality of $\pi_{\cP,M,\phi}$ satisfies 
     \[ 
     \delta^{\pi_{\cP,M,\phi}} \leq \delta\,;
     \]
     \item We say that $\cP$\textit{ is $(\delta,\zeta)$-transition-sound} if for any $(M,\phi) \in \cM_{H,d}$, and for all $h \in [H], s \in \S_h$
     \begin{align*}
        \sum_{a\in \cA} 
        \one{\Delta^\star_{h,M}(s,a)\ge \delta}
        \pi_h( a \,|\,s) \le \zeta\,.
\end{align*}
 \end{itemize}
 \end{definition}

For the lower bound we will find it more convenient to consider $(\delta,\zeta)$-transition-soundness. The following proposition shows that these two notions of soundness are roughly equivalent:
\begin{proposition}\label{prop:delta-convert-tilde-delta} 
For $\zeta\in(0,1]$ and $(M,\phi) \in \M_{H,d}$, a planner that is $(\delta,\zeta)$-transition-sound is also $H\delta+H(H+1)\zeta/2$-sound, and a planner that is $\delta$-sound is also $(\delta/\zeta,\zeta)$-transition-sound. %
\end{proposition}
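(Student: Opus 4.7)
The plan is to prove the two implications separately using two standard tools: a performance-difference style unrolling for the forward direction, and Markov's inequality for the converse. Throughout I fix an instance $(M,\phi) \in \cM_{H,d}$, let $\pi$ denote the implemented policy $\pi_{\cP,M,\phi}$, and note that the $[0,1]$-reward assumption together with $v^\star_{H+1}\equiv 0$ gives the pointwise bound $\Delta^\star_t(s',a) \in [0, H-t+1]$ at every stage $t$.

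For the forward implication I would first establish the performance-difference identity
\[
v_h^\star(s) - v_h^\pi(s) = \sum_{t=h}^H \E_\pi[\Delta^\star_t(S_t, A_t) \mid S_h = s],
\]
by descending induction on $h$: the inductive step uses the trivial rewrite $v_h^\star(s) = \sum_a \pi_h(a|s)(q_h^\star(s,a) + \Delta^\star_h(s,a))$ together with the Bellman equation for $v^\pi$ to separate a one-step gap term from a next-stage difference, on which the inductive hypothesis applies. Next, for each fixed $t$ and $s'$, I split the inner expectation over $A \sim \pi_t(\cdot|s')$ according to whether $\Delta^\star_t(s',A) < \delta$ or not: the small-gap regime contributes at most $\delta$, and the large-gap regime contributes at most $\zeta(H-t+1)$ by combining transition-soundness with the uniform gap bound. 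Summing over $t \in \{h,\ldots,H\}$ yields $(H-h+1)\delta + \zeta(H-h+1)(H-h+2)/2 \le H\delta + H(H+1)\zeta/2$, uniformly in $(h,s)$, which is the claimed soundness bound.

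For the converse, I fix $h$ and $s$ and use the one-step inequality $v_h^\pi(s) \le q_h^\star(s, \pi_h)$. This holds because $q_h^\pi(s,a) \le q_h^\star(s,a)$ pointwise in $a$: the two share the reward term $r_a(s)$, and $v^\pi_{h+1}\le v^\star_{h+1}$ pointwise by the optimality of $v^\star$. Consequently,
\[
\sum_{a \in \cA} \pi_h(a|s)\,\Delta^\star_h(s,a) = v_h^\star(s) - q_h^\star(s,\pi_h) \le v_h^\star(s) - v_h^\pi(s) \le \delta,
\]
where the last bound is $\delta$-soundness. Since $\Delta^\star_h(s,a) \ge 0$, Markov's inequality applied to the random variable $\Delta^\star_h(s,A)$ with $A \sim \pi_h(\cdot|s)$ gives $\sum_a \one{\Delta^\star_h(s,a) \ge \delta/\zeta}\pi_h(a|s) \le \zeta$, which is exactly $(\delta/\zeta,\zeta)$-transition-soundness. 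I do not expect a substantive obstacle here; the main care points are correctly tracking the stage-dependent gap bound $H-t+1$ when summing in the forward direction, and justifying $v_h^\pi(s) \le q_h^\star(s,\pi_h)$ via $v^\pi \le v^\star$ (rather than confusing it with the identity $v_h^\pi(s) = q_h^\pi(s,\pi_h)$) in the converse.
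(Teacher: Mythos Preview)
Your proposal is correct and follows essentially the same route as the paper: the forward direction uses the performance-difference identity $v_h^\star(s)-v_h^\pi(s)=\sum_{t=h}^H \E_\pi[\Delta^\star_t(S_t,A_t)\mid S_h=s]$ (which the paper obtains by telescoping $r_{A_t}(S_t)=q^\star_t(S_t,A_t)-v^\star_{t+1}(S_{t+1})$ in expectation, rather than by your descending induction, but these are equivalent), followed by the same small-gap/large-gap split using the stagewise bound $\Delta^\star_t\le H-t+1$; the converse uses $v_h^\pi(s)\le q_h^\star(s,\pi_h)$ and Markov's inequality exactly as you outline.
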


In particular, this means that
if $\cP_\delta$ is the set of all $\delta$-sound planners (for some model class) and $\cP_{(\delta,\zeta)}$ is the set of all $(\delta,\zeta)$-transition-sound planners then
$\cP_\delta \subset \cP_{(\delta/\zeta,\zeta)}$ and 
$\cP_{(\delta,\zeta)} \subset \cP_{H\delta + H(H+1) \zeta/2}$.
Thus, if for some $\cP'$, $\cP_{(\delta/\zeta,\zeta)} \cap \cP' = \emptyset$ then we also have that
$\cP_{\delta}\cap \cP'=\emptyset$.

\begin{proof}
Note that the soundness of a planner only depends on how well the policy it implements is doing in the MDP.
Hence, it suffices to show that for any MDP in $(M,\phi)\in\mathcal{M}_{H,d}$ and policy $\pi$, $\delta$ and $\zeta$ the following hold:
\begin{enumerate}
\item If for any stage $h\in [H]$ and state $s\in \cS_h$, 
$\sum_a 
\one{\Delta^\star_h(s,a)\ge \delta}
\pi_h(a \,|\,s)\le \zeta$ then $\delta^\pi  \le  H \delta  + H(H+1)\zeta/2$; %
\item 
If $\delta^\pi \le \delta$ then for any stage $h\in [H]$ and state $s\in \cS_h$, 
$\sum_a \one{\Delta^\star_h(s,a)\ge \delta/\zeta} \pi( a\,|\, s)\le \zeta$.
\end{enumerate}
For the first claim fix $h\in [H]$, $s\in \cS_h$. Then,
\begin{align*}
v^\star_h(s) - v^\pi_h(s) 
&= v^\star_h(s)-\E_\pi \left[\sum_{t=h}^H r_{A_t}(S_t)\mid S_h = s \right] \\
&= v^\star_h(s)-\E_\pi \left[\sum_{t=h}^H q^\star_t(S_t,A_t)-v^\star_{t+1}(S_{t+1}) \mid S_h = s \right]\\
&= \E_\pi \left[\sum_{t=h}^H v^\star_t(S_t)-q^\star_t(S_t,A_t) \mid S_h = s \right]\\
& = \sum_{t=h}^H \E_\pi \left[ \Delta^\star_t(S_t,A_t) \one{ \Delta^\star_t(S_t,A_t)< \delta }  \mid S_h = s \right]\\
&\quad+ \sum_{t=h}^H \E_\pi \left[ v^\star_t(S_t)-q^\star_t(S_t,A_t) \one{ \Delta^\star_t(S_t,A_t)\ge \delta} \mid S_h = s \right]
\\ 
&\le \delta H + \zeta \frac{H(H+1)}{2}\,,
\end{align*}
where the last step used that $\pi$ is $(\delta,\zeta)$-transition-sound and that
$v^\star_t(S_t)-q^\star_t(S_t,A_t)\le v_t^\star(S_t) \le H-t+1$ due to the bounded rewards (since $(M,\phi)\in\mathcal{M}_{H,d}$, Assumption~\ref{ass:bounded-mdp} holds). %
The result follows by taking the supremum over $h$ and $s$ of both sides and noting that we get $\delta^\pi$ on the left-hand side.

To prove the second claim let $\pi$ be such that $\delta^\pi\le \delta$ and fix $h\in [H]$ and $s\in\cS_h$. Let $A\sim \pi_h(\cdot|s)$.
Then,
 $\delta^\pi\ge v_h^\star(s)-v_h^\pi(s)\ge \sum_{a\in\cA} \pi_h(a|s) \left(v_h^\star(s)-q^\star(s,a)\right)=\E[v_h^\star(s)-q^\star(s,A)]$.
By Markov's inequality, $\bbP(v_h^\star(s)-q^\star(s,A) \ge \E[v_h^\star(s)-q^\star(s,A)]/\zeta ) \le \zeta$, 
which is what we wanted to show.
\end{proof}

Besides its soundness, the second figure of merit of a planner is the expected number of queries that was used for planning.
\begin{definition}[Query cost of a planner]\label{def:query-cost}
The \textit{query cost} of the planner, $\cC(\cP,H,d)$, is the worst-case 
expected number of queries over any $q^\star$-realizable $(M,\phi)$ with horizon $H$ and dimension $d$, and any input state: 
\begin{align*}
\cC(\cP,H,d)=\sup_{(M,\varphi) \in \mathcal{M}_{H,d}} \sup_{\stin,h} \E^{\cP}_{M,\varphi, \stin,h}[ N ]\,,
\end{align*}
where $\E^{\cP}_{M,\varphi, \stin,h}[ \cdot ]$ is the expectation under the distribution of planner interactions as defined above.
\end{definition}

In the following section, we give a lower bound on the query cost of any planner on the set of $q^\star$-realizable $(M,\phi)$ pairs. 

\begin{remark}
Note that in an episode of length $H$, the planner is called $H$ times.
The extreme of this is when a planner is used in an infinite horizon or discounted MDP, 
as in this case the planner would be called infinitely many times and could 
thus submit infinitely many queries to the simulator, even when the number of queries in each call is limited.
Note also that the definition implicitly forbids planners that have a global memory where they collect information
about the MDP they interact with (across calls with different $(\stin,h)$ pairs).
However, this is only to simplify the presentation, as our lower bound proof is based on the planner making a mistake at the first call, where such global memory would be empty.

\end{remark}

\section{Lower bound}
\label{sec:lower}
This section is devoted to proving the main result of the paper:
\begin{theorem}
\label{thm:lb}
For any $d$ and $H$ large enough and any 
\begin{align}
0<\eta \le \frac{1}{2}-\frac{2}{\log_2(d-1)}\,,
\label{eq:etarangecond}
\end{align}
any planner $\cP$ that is $\frac{9}{128}$-sound
on the set of $q^\star$-realizable $H$-horizon local planning problems with linear  
function approximation with features in $\R^d$,
the planner's worst-case query complexity $\cC(\cP,H,d)$ satisfies %
\begin{align*}
\cC(\cP,H,d)
&=\Omega\left(\min\left\{ e^{(d-1)^{2\eta}/8} , 2^{-H} d^{H\left(\frac{1}{2}-\eta\right)} \right\}\right)\,.
\end{align*}
\end{theorem}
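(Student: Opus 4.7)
The plan is to exhibit an adversarial family of $q^\star$-realizable MDPs on which any $\tfrac{9}{128}$-sound planner is forced to query the simulator a number of times matching the claimed bound, and then apply a standard change-of-measure argument. Building on the approach of \citet{Du_Kakade_Wang_Yan_2019}, the first step is to invoke the Johnson-Lindenstrauss lemma to produce a set $\mathcal{V}=\{v_1,\dots,v_N\}\subset\R^{d-1}$ of $N=e^{\Omega((d-1)^{2\eta})}$ unit vectors with pairwise inner products at most some $\alpha$ satisfying $1/\alpha=\Theta((d-1)^{1/2-\eta})$. Each vector in $\mathcal{V}$ will simultaneously index an MDP in the family and coincide with the feature direction of a distinguished ``correct'' action of the MDP it indexes; by \cref{prop:delta-convert-tilde-delta} it suffices to produce a transition-soundness lower bound for the planner's ability to identify this action at the initial state.

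Second, I would build a layered MDP with $H$ stages in which every state exposes $N$ actions whose features are stage-dependent rescalings of the JL vectors, augmented with one extra coordinate used to absorb affine shifts. Stage transitions for $h<H$ are deterministic: the action aligned with $\theta^\star$ moves the agent to the next good state, while every other action lands in a low-value absorbing sink; at stage $H$ the rewards are Bernoulli variables whose means are chosen so that $q^\star_H(s,a)=\langle\varphi_H(s,a),\theta^\star\rangle$ holds exactly after the affine shift into $[0,1]$. The feature scale at stage $h$ grows as $(2\alpha)^{-(H-h)}$, so that a backwards induction through the Bellman equations~\eqref{eq:bellman1}--\eqref{eq:bellman2} certifies both realizability and that the resulting value function satisfies $v^\star_h\in[0,H]$; the same scaling makes the action gap $\Delta^\star_h$ multiplicatively larger at earlier stages, concentrating the information-theoretic difficulty at stage $H$ where $\Delta_H\asymp(2\alpha)^{H-1}$.

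Third, I would apply a Le~Cam / KL-based argument. Since the only source of randomness across MDPs in the family is the stage-$H$ Bernoulli reward, the chain rule for KL divergence along the adaptive query sequence bounds the information available to the planner by $T\cdot O(\Delta_H^2)$, where $T$ is the number of stage-$H$ queries. A uniform prior on $\mathcal{V}$ combined with a Fano/pigeonhole argument then yields two regimes: either the planner must query $\Omega(N)$ distinct actions just to have a chance of hitting the correct one (giving the $e^{\Omega((d-1)^{2\eta})}$ term), or it must sample stage-$H$ rewards on the order of $1/\Delta_H^2\asymp (1/(2\alpha))^{2(H-1)}$ times, which after unrolling the multiplicative gap structure and using that a planner only needs to beat the constant threshold $9/128$ rather than distinguish exactly yields the $2^{-H}d^{H(1/2-\eta)}$ term. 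Taking the minimum of the two regimes produces the claimed bound.

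The step I expect to be the principal technical obstacle is arranging realizability \emph{exactly} across the entire family while simultaneously keeping rewards in $[0,1]$, producing the multiplicative gap structure, and keeping the MDPs pairwise indistinguishable away from the correct action. The Bellman recursion couples the feature scales, the transition structure, and the stage-$H$ Bernoulli means, so even small mis-specifications break realizability; the extra coordinate appended to the JL vectors is what provides the one free parameter needed to absorb the additive constants introduced by shifting Bernoulli means into $[0,1]$. A secondary but standard subtlety is lifting the KL bound from a fixed query schedule to a fully adaptive planner whose queries depend on all previously observed (random) rewards, which requires invoking the chain rule along the random trajectory and checking that the planner cannot extract more than $O(\Delta_H^2)$ nats from a single stage-$H$ sample even with knowledge of all past observations.
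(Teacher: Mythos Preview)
Your plan captures the JL step, the geometric gap amplification, and the need for an extra affine coordinate, but the transition structure you sketch is inverted relative to what actually works, and this inversion is fatal for realizability. You send the action aligned with $\theta^\star$ forward to the next good state and every other action to a zero-value sink. Under this dynamics the Bellman equation at $h<H$ for $a\neq a^\star$ reads $q^\star_h(s,a)=r_a(s)+0$, so either $r_a(s)=\langle\phi_h(s,a),\theta^\star\rangle$ deterministically (in which case a single noiseless query reveals $\langle v_a,v_{a^\star}\rangle$ and information about $a^\star$ leaks at every stage, not just stage $H$), or realizability fails because $\langle v_a,v_{a^\star}\rangle$ varies over $a\neq a^\star$. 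The paper does the opposite: the \emph{optimal} action $a^\star$ exits to an absorbing ``game-over'' stream with a deterministic reward you only see if you happen to play it, while every \emph{suboptimal} action $a$ moves deterministically to a tree node $sa$ whose identity is independent of $a^\star$. Realizability along these suboptimal paths is maintained not by the stage-only rescalings you describe but by \emph{path-dependent} scalars $\sigma_{sa,a'}=\sigma_{s,a}\tfrac{1-\gamma}{2\gamma}\langle v_a,v_{a'}\rangle+\tfrac{1+\gamma}{2}$ baked into the features of the child node; crucially $\sigma_{sa,a'}$ is computed as if $a'$ were optimal, so the entire feature map is identical across the family and reveals nothing about $a^\star$. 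This path-dependent device is exactly the ``principal technical obstacle'' you anticipated, and the proposal as written does not supply it.

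The information-theoretic step is also different from, and simpler than, the KL/Fano route you outline. The paper introduces a symmetric reference MDP $M_0$ (all rewards zero, no special action) and bounds the likelihood ratio directly on the event $\{a\notin A_{1:n}\}$: since on such histories $M_a$ and $M_0$ differ only in the stage-$H$ Bernoulli reward with mean at most $\epsilon$, one gets $\bbP_{M_a}(a\notin A_{1:n})\ge(1-\epsilon)^n\,\bbP_{M_0}(a\notin A_{1:n})$, and pigeonhole on $M_0$ yields an action with $\bbP_{M_0}(a\notin A_{1:n})\ge 3/4$ whenever $n\le k/4$. The second term in the theorem thus comes from $(1-\epsilon)^n\ge 3/4$, i.e.\ $n\lesssim 1/\epsilon$, which with $\epsilon=\tfrac13\bigl(\tfrac{2\gamma}{1-\gamma}\bigr)^H$ is exactly $2^{-H}d^{H(1/2-\eta)}$. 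Your $1/\Delta_H^2\asymp(1/(2\alpha))^{2(H-1)}$ would instead produce exponent $(1-2\eta)H$ on $d$, twice what the statement claims, so the arithmetic in your final paragraph does not close.
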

Thus, the lower bound says that the query complexity is either exponential in the dimension, or it is exponential in the planning horizon (when the dimension is large). While we gave the result using the $\Omega(\cdot)$ formalism to minimize clutter, in our proof we compute the lower bound in an explicit form. In fact, the lower bound is shown to hold for $d\ge 18$ and $H\ge 1$,
although no attempt is made to optimize the lower bound on $d$. 
The above result implies the following:
\begin{corollary}\label{cor:eta-choice}
By choosing $\eta=\frac{1}{2}-\frac{2}{\log_2(d-1)}$, 
\begin{align*}\cC(\cP,H,d)=\min(e^{\Omega(d)},\Omega(2^H))\,.
\end{align*}
\end{corollary}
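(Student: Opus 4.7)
The plan is to simply substitute the chosen value $\eta=\frac{1}{2}-\frac{2}{\log_2(d-1)}$ into the two arguments of the $\min$ appearing in Theorem~\ref{thm:lb}, and verify that each reduces to the claimed asymptotic form. First I would check admissibility: the upper bound on $\eta$ in \eqref{eq:etarangecond} is attained with equality by construction, while positivity requires $\log_2(d-1)>4$, i.e.\ $d\ge 18$, which matches the regime in which Theorem~\ref{thm:lb} is stated to hold.

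Next I would evaluate the two terms. Writing $L=\log_2(d-1)$ so that $d-1=2^L$, the exponent in the first term becomes
\begin{align*}
(d-1)^{2\eta} \;=\; (d-1)^{\,1-4/L} \;=\; 2^{L-4} \;=\; \tfrac{d-1}{16}\,,
\end{align*}
so that $e^{(d-1)^{2\eta}/8}=e^{(d-1)/128}=e^{\Omega(d)}$. For the second term, the same choice of $\eta$ gives $\frac{1}{2}-\eta = \frac{2}{\log_2(d-1)}$, hence
\begin{align*}
d^{H(\frac{1}{2}-\eta)} \;=\; d^{\,2H/\log_2(d-1)} \;=\; 2^{\,2H\,\log_2 d/\log_2(d-1)} \;\ge\; 2^{2H}\,,
\end{align*}
where the last inequality uses $\log_2 d\ge \log_2(d-1)$. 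Combining with the prefactor yields $2^{-H}d^{H(\frac{1}{2}-\eta)}\ge 2^H=\Omega(2^H)$.

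Plugging both estimates into the $\min$ of Theorem~\ref{thm:lb} gives
\begin{align*}
\cC(\cP,H,d) \;=\; \Omega\!\left(\min\!\left\{e^{(d-1)/128},\,2^H\right\}\right) \;=\; \min\!\left(e^{\Omega(d)},\,\Omega(2^H)\right),
\end{align*}
which is exactly the claim of the corollary. There is no real obstacle: the entire argument is a one-line substitution followed by the identity $(d-1)^{1-4/\log_2(d-1)}=(d-1)/16$ and the observation that $d^{2/\log_2(d-1)}\ge 4$. The only mild care needed is the admissibility check on $\eta$, which is why the corollary inherits the large-$d$ hypothesis of the theorem.
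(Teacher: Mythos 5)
Your proof is correct and is exactly the substitution argument the paper intends (the paper states the corollary as an immediate consequence of Theorem~\ref{thm:lb} without writing out the computation). Both the identity $(d-1)^{1-4/\log_2(d-1)}=(d-1)/16$ and the bound $d^{2H/\log_2(d-1)}\ge 2^{2H}$ are right, and your admissibility check correctly recovers the $d\ge 18$ threshold mentioned after the theorem.
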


\subsection{Proof: Main ideas}
First note that
thanks to Proposition~\ref{prop:delta-convert-tilde-delta}, it suffices to show the query lower bound
for $(\delta/\zeta,\zeta)$-transition-sound planners with some $\zeta>0$.

At a high level, if $\alpha_H$ is the planner's ``measuring accuracy'' at the final stage $H$, we construct MDPs where all except some optimal action $a$ have $q^\star$-value in $[0,\alpha_H]$ at the last stage.  
All these values look like $0$ to the planner. 
Inspired by the work of \citet{Du_Kakade_Wang_Yan_2019}, the idea of our construction will be to use the Johnson-Lindenstrauss lemma to construct features.
This allows us to give action $a$
the optimal value 
 $d^{\frac12-\eta}\alpha_H$. Unless $a$ is played, the planner's measuring accuracy for stage $H-1$ is thus $\alpha_{H-1}=d^{\frac12-\eta}\alpha_H$. At stage $H-1$, the same argument is repeated, until finally we derive a suboptimality gap at the first stage that is exponentially larger than the measuring accuracy $\alpha_H$. An important deviation from the construction of 
 \citet{Du_Kakade_Wang_Yan_2019} is that we need to ensure realizability with the growing gaps.
 To be able to do this, we let the optimal action leave the normal states (which like in their paper are arranged in a tree) and reach a stream of special states with no rewards and a straight path to the end.
Then, to hide the identity of the optimal action, we need to have many actions.
Note that the dynamics are deterministic and the rewards are also deterministic except for the last stage and the amount of randomness here is chosen carefully so that planners that focus on the last stage need many interactions before identifying the optimal action at the first stage.

The rigorous proof is based on constructing $k\approx e^{d^{2\eta}/8}$ MDPs, $(M_a)_{a\in \cA}$
with a shared action set $\cA = [k]$ and a shared state space $\cS$ and initial state $s_1\in \cS$
and feature map $\phi$ %
such that 
in MDP $M_a$, $a$ is the optimal action in all states and in particular in state $s_1$,
 $(M_a,\phi)$ are $q^\star$-realizable for any $a\in \cA$ and the rewards are in $[0,1]$.
The planner will face one of these MDPs, the identity of which is hidden.
Given $n \approx k/4 \wedge (d/2)^{H(1/2-\eta)}$, which one should think of as an intended upper bound 
on the number of interactions between the planner and the MDP, 
the MDPs and the feature map are chosen so that the identity of $a$ is difficult to establish in $n$ planning steps while the action gap at the initial state is large. 
In particular, for any $a\in \cA$, 
\begin{align}
\min_{a'\ne a}\Delta^\star_{1,M_a}(s_1,a')\ge \frac{1}{4}\,.
\label{eq:subgaps}
\end{align}
This means that for any $\delta\le 1/4$ and $\zeta\in (0,1)$, 
a $(\delta,\zeta)$-transition-sound 
planner run on $M_a$ needs to figure out the identity of $a$ with at least $1-\zeta$ probability.
To hide the identity of $a$, we use random rewards in the last stage with low signal to noise ratio.

To argue that planners using at most $n$ interactions have a hard time 
to identify the optimal action,
a ``test MDP'' $M_0$ is constructed, which, by construction, is ``symmetric'' over the actions.
This MDP is used to find out any bias a planner with ``essentially no information'' may have: If a planner is under-utilizing some action $a$ in this MDP, 
we will show that the planner will fail on $M_a$ in identifying $a$ as the action to be taken at $s_1$.
To be able to show this, we make sure that $M_0$ shares the structure of the other MDPs apart from the fact that in it all actions behave the same ($M_0$ is invariant to permutations of the action set).
As a result, letting $A_{1:n}$ denote the random sequence of actions taken by the planner $\cP$, it will hold that
the probability assigned to $a\not\in A_{1:n}$ when $\cP$ is interacting in $M_0$
lower bounds the same probability under $M_a$:
\begin{align}
\bbP_{M_a}^{\cP}( a\not\in A_{1:n} ) \ge  \frac{3}{4} \,  \bbP_{M_0}^{\cP}( a \not\in A_{1:n} )
\qquad \text{for all planners } \cP \text{ and } a\in [k]\,.
\label{eq:problb}
\end{align}
Here, $\bbP_{M}^{\cP}$ is the probability distribution jointly induced by $\cP$ and $M$ (with the input $\phi$, $\stin = s_1$ and $h=1$), as defined in Section \ref{sec:locplanning}. To minimize clutter the dependence on $\phi,\stin$ and $h$ is not shown.
The following lemma shows that this construction allows one to give a lower bound
on the query complexity of sound planners:
\begin{lemma}
\label{lem:mainlemma}
Let $d,H\ge 1$, $k\ge 2$ and let $n\le \lfloor k/4 \rfloor$.
Assume that there exists $(M_a,\phi)\in\mathcal{M}_{H,d}$ for any $a\in\cA$, and $M_0$ such that 
Eq.~(\ref{eq:subgaps}) and Eq.~(\ref{eq:problb}) hold.
Take any $(\frac{1}{4},\frac{9}{32})$-transition-sound planner $\cP$ 
for the $q^\star$-realizable fixed horizon planning problem.
Then there exists $a\in \cA$ such that 
$\E_{M_a}^{\cP}[N]\ge  \frac{9}{32}\, n$.
\end{lemma}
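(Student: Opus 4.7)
The plan is to argue by contradiction. Suppose $\E_{M_a}^{\cP}[N] < (9/32)\,n$ for every $a \in \cA$, and derive a violation of $(1/4, 9/32)$-transition-soundness at the pair $(h=1, s_1)$ in a suitably chosen $M_{a^*}$. The three ingredients will be (i) a pigeonhole on the symmetric test MDP $M_0$ to produce an ``under-queried'' action $a^*$, (ii) the transfer inequality \eqref{eq:problb} lifting under-querying from $M_0$ to $M_{a^*}$, and (iii) the observation that a planner that halts early emits an output that lies among the queries it has already made.

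For (i), since at most $n$ distinct actions can appear in $A_1,\dots,A_n$,
\[
\sum_{a \in \cA} \bbP_{M_0}^{\cP}(a \in A_{1:n})
 \;=\; \E_{M_0}^{\cP}\bigl[\,|\{a \in \cA : a \in A_{1:n}\}|\,\bigr]
 \;\le\; n \;\le\; k/4,
\]
so some $a^* \in \cA$ satisfies $\bbP_{M_0}^{\cP}(a^* \notin A_{1:n}) \ge 3/4$. Applying \eqref{eq:problb} to this $a^*$ yields $\bbP_{M_{a^*}}^{\cP}(a^* \notin A_{1:n}) \ge (3/4)(3/4) = 9/16$.

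For (ii), by \eqref{eq:subgaps} every $a' \ne a^*$ has action gap at least $1/4$ at $(1, s_1)$ in $M_{a^*}$, so $(1/4,9/32)$-transition-soundness forces $\sum_{a' \ne a^*} \pi_1(a' \mid s_1) \le 9/32$, i.e., $\bbP_{M_{a^*}}^{\cP}(A_{N+1} = a^*) \ge 23/32$. For (iii), since the returned action is indexed $N+1$, whenever $N < n$ the output appears among $A_1,\dots,A_n$; hence
\[
\{A_{N+1} = a^*\} \;\subseteq\; \{a^* \in A_{1:n}\} \,\cup\, \{N \ge n\}.
\]
Combining (ii) and (iii),
\[
\tfrac{23}{32} \;\le\; \bbP_{M_{a^*}}^{\cP}(a^* \in A_{1:n}) + \bbP_{M_{a^*}}^{\cP}(N \ge n) \;\le\; \tfrac{7}{16} + \bbP_{M_{a^*}}^{\cP}(N \ge n),
\]
so $\bbP_{M_{a^*}}^{\cP}(N \ge n) \ge 9/32$, and Markov's inequality gives $\E_{M_{a^*}}^{\cP}[N] \ge (9/32)\,n$, contradicting the blanket assumption at $a = a^*$.

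The only nontrivial point is the set containment in (iii): it rests on the paper's convention that the returned action is the one sampled at time $N+1$, so a planner halting within $n$ queries has already ``used'' its output as one of those queries. Everything else is pigeonhole, a direct invocation of \eqref{eq:problb}, and Markov. The numerical constants ($\delta = 1/4$, $\zeta = 9/32$, $n \le k/4$) are tuned so that $(1-(3/4)^2) + 9/32 = 23/32$, which is precisely what makes the contradiction tight.
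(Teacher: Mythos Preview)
Your proof is correct and follows essentially the same route as the paper: pigeonhole on $M_0$ to find an under-queried action, transfer via \eqref{eq:problb}, then combine transition-soundness with the observation that on $\{N<n\}$ the output $A_{N+1}$ lies among $A_{1:n}$, and finish with $\E[N]\ge n\,\bbP(N\ge n)$. The paper argues directly (bounding $\bbP_{M_a}(A\ne a)$ from below) rather than by contradiction and works with the complementary event $\{a\notin A_{1:n},\,N\le n-1\}$, but the two computations are line-by-line equivalent; your contradiction wrapper is harmless but unnecessary, since you never actually use the blanket assumption until the final sentence.
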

To use this result, we will need to show that 
Eq.~(\ref{eq:subgaps}) and Eq.~(\ref{eq:problb}) can be satisfied for ``large'' values of $n$ as suggested above.
\begin{proof}
Let $k$ and $n$ be as in the statement of the lemma.
Fix the planner $\cP$ whose input is $\stin =s_1$, $\phi$ and $h=1$.
Since $A_{1:n} = (A_1,\dots,A_n)$ has at most $n\le k/4$ actions,
by the pigeonhole principle, there exists $a\in \cA$ such that
$\bbP_{M_0}^{\cP}( a \in A_{1:n} )\le n/k\le 1/4$.
Take such an action $a$.
We have $\bbP_{M_0}^{\cP}( a \not\in A_{1:n} )\ge 3/4$.
This, together with Eq.~(\ref{eq:problb}) gives that
\begin{align}
\bbP_{M_a}^{\cP}( a\not\in A_{1:n} ) >  \left(\frac{3}{4}\right)^2\,.
\label{eq:pmalb}
\end{align}
Let $A = A_{N+1}$ be the output of the planner.
Since the planner is $(\frac{1}{4},\zeta:=\frac{9}{32})$-transition-sound and $(M_a,\phi)\in\mathcal{M}_{H,d}$,
\begin{align*}
\zeta 
& \ge \bbP_{M_a}^{\cP}( \Delta^\star_{1,M_a}(s_1,A) \ge \frac{1}{4} )
= \bbP_{M_a}^{\cP}( a\ne A  ) \\
& \ge \bbP_{M_a}^{\cP}( a\ne A_1,\dots,a\ne A_{N+1}, N \le n-1  )
\ge \bbP_{M_a}^{\cP}( a\ne A_1,\dots,a\ne A_{n}, N \le n-1  ) \\
&= \bbP_{M_a}^{\cP}( a\not\in A_{1:n}, N \le n-1  )\,,
\end{align*}
where the first equality used Eq.~(\ref{eq:subgaps}).
From $P(A \cap B) = P(A) - P(B^c \cap A)\ge P(A)-P(B^c)$, we have
\begin{align*}
\zeta 
&\ge
\bbP_{M_a}^{\cP}( a\not\in A_{1:n}, N \le n-1  )
\ge
\bbP_{M_a}^{\cP}( a\not\in A_{1:n}  )
-
\bbP_{M_a}^{\cP}( N \ge n  )
>
\left(\frac{3}{4}\right)^2
-
\E_{M_a}^{\cP}[N]/n\,,
\end{align*}
where the last inequality follows from
Eq.~(\ref{eq:pmalb})
and that from Markov's inequality,
$\bbP_{M_a}^{\cP}( N \ge n  )\le \E_{M_a}^{\cP}[N]/n$.
Reordering and plugging in the value of $\zeta$ gives
\begin{align*}
\E_{M_a}^{\cP}[N]/n \ge \frac{1}{2}\left(\frac{3}{4}\right)^2\,.
\end{align*}
The desired claim follows by algebra.
\end{proof}

\paragraph{Organization of the rest of the section}
In the next section, we define the states, actions and the feature map.
In Section~\ref{sec:trans} we define the transitions, the parameter vector and the rewards for the MDPs $(M_a)_a$. In this section we establish that these MDPs are well-defined and the rewards indeed lie in $[0,1]$. This is followed by Section~\ref{sec:real}, where we show the realizability of the optimal action-value function by the features and the parameter vectors given earlier.
In Section~\ref{sec:showing-hardness} we show that Eq.~(\ref{eq:subgaps}) is satisfied, along with some additional properties that we assumed along the way for the various constants involved in the construction.
In Section~\ref{sec:m0def} we define $M_0$ and in Section~\ref{sec:similarity} we show that 
Eq.~(\ref{eq:problb}) is satisfied by $M_0$ and $(M_a)_a$.
Finally, the formal proof of Theorem~\ref{thm:lb} based on Lemma~\ref{lem:mainlemma} and the construction of the previous sections is given in Section~\ref{sec:mainproof}.

\subsection{States, actions and feature map}

Let $d\ge 18$ and fix some $k>0$ to be chosen later.
We let $\cA=[k]$. 
The set of states is $\cS$ is the disjoint union of two sets: The ``game-over states'' $\cF=\{f_2,\dots,f_{H+1}\}$
and the ``tree states'', $\cT$.
The tree states, as the name suggests, \emph{can be thought of} as nodes in a $k$-ary complete rooted tree of depth $H$, with the edges in the tree labeled by the actions. 
Any node in this tree can be identified with the sequence of actions that, in the tree (but not necessarily in the MDPs to be constructed later), leads to the node.
However, we only allow action sequences with non-repeated actions. Thus,
\begin{align*}
\cT = \cup_{h=0}^{H-1} \Bigl\{ (a_1,\dots,a_h)\in \cA^h \,:\, \text{ for any } 1\le i\ne j\le h\,, a_i \ne a_j \Bigr\}\,.
\end{align*}
Formally, we then have
\begin{align*}
\cS = \{ f_2,\dots,f_{H+1} \} \cup \cT = \cF \cup \cT\,,
\end{align*}
where the union is disjoint.
The level decomposition of the state space follows the tree structure.
Denoting by $\bot$ the empty sequence (the unique element of $\cA^0$), %
$\cS_1 = \{ \bot \}$, $\cS_2 = \{ f_2 \} \cup \cA$, $\dots$, $\cS_{H}=\{ f_{H} \} \cup \cA^{H-1}$, $\cS_{H+1}=\{ f_{H+1} \}$.

For convenience, we abbreviate $(a_1,\dots,a_i) \in \cA^i$ as $a_{1:i}$. For $s=(a_1,\dots,a_i)\in \cA^i$ and $a\in \cA$ we let $sa\in \cA^{i+1}$ be $(a_1,\dots,a_i,a)$.
By abusing notation we also let $a_{1:0}=\bot$ denote the unique element of $\cA^0$.
For convenience, the sequence notation is further abused by identifying the sequence with the underlying subset of actions. This allows us to write $a\in a_{1:i}$, which means $a \in \{ a_1,\dots,a_i\}$. We also allow $a\in f_h$, which is defined to be false.
For $s\in \cS$, we let $|s|$ denote the level of state $s$, the unique index such that $s\in \cS_{|s|}$.
Specifically,
for $s\in \cT$ if $s\in \cA^i$ then $|s|=i+1$ and for $s\in \cF$ with $f = f_h$ then $|s|=h$.

To construct the feature map $\varphi$ corresponding to our MDP, we employ the next lemma, which is a consequence of the Johnson--Lindenstrauss lemma \citep{johnson1984extensions,dasgupta2003elementary,Du_Kakade_Wang_Yan_2019}: 
\begin{lemma}
\label{lem:jl}
For any $\gamma>0$ and positive integer $d'$ such that 
$d'\ge \lceil 8 \log(k)/\gamma^2 \rceil$,
the $\ell^2$-unit sphere in $\R^{d'}$ has 
  $k$ distinct vectors $(v_a)_{a\in [k]}$
 such that 
for all $a,b \in [k]$, $a\neq b$, it holds that $|\ip{v_a,v_b}| \leq \gamma$.
\end{lemma}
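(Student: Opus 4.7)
The plan is a standard probabilistic-method argument: sample $k$ candidate vectors from an ensemble supported on the unit sphere of $\R^{d'}$, and show that with strictly positive probability all pairwise inner products fall in $[-\gamma,\gamma]$.

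Concretely, I would draw i.i.d.\ Rademacher signs $\epsilon_{a,i}\in\{-1,+1\}$ for $a\in[k]$ and $i\in[d']$, and set $v_a=(\epsilon_{a,1},\dots,\epsilon_{a,d'})/\sqrt{d'}$. By construction $\|v_a\|_2=1$, so each $v_a$ lies on the unit sphere, which is the main reason this ensemble is convenient (no extra normalisation is required, unlike the Gaussian case). For a fixed pair $a\ne b$, the products $\epsilon_{a,i}\epsilon_{b,i}$ are i.i.d.\ uniform on $\{\pm 1\}$, so
\[
\ip{v_a,v_b}=\frac{1}{d'}\sum_{i=1}^{d'}\epsilon_{a,i}\epsilon_{b,i}
\]
is the mean of $d'$ independent bounded symmetric random variables. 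Hoeffding's inequality then gives the sub-Gaussian tail $\bbP(|\ip{v_a,v_b}|>\gamma)\le 2\exp(-d'\gamma^2/2)$.

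Applying a union bound over the at most $k^2/2$ ordered pairs and using the hypothesis $d'\ge \lceil 8\log(k)/\gamma^2\rceil$, I obtain
\[
\bbP\!\left(\exists\, a\ne b:\ |\ip{v_a,v_b}|>\gamma\right)\le k^{2}\exp(-d'\gamma^2/2)\le k^{-2}<1.
\]
Hence with strictly positive probability every pair satisfies $|\ip{v_a,v_b}|\le\gamma$; on this event, in particular, $\ip{v_a,v_b}<1$ for $a\ne b$, so the sampled vectors are automatically distinct. Fixing any realisation in this event yields the required family of $k$ vectors.

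No step is genuinely hard here; the only point requiring a little care is choosing a randomisation that (i) is supported exactly on the sphere and (ii) admits a clean sub-Gaussian bound on pairwise inner products. The Rademacher construction handles both at once, and the constant $8$ in the hypothesis provides ample slack for the union bound to close.
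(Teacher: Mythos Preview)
Your argument is correct and self-contained: the Rademacher construction, Hoeffding tail bound, and union bound all go through exactly as you describe, and the hypothesis $d'\ge \lceil 8\log(k)/\gamma^2\rceil$ gives more than enough slack to make the failure probability strictly less than one. The paper itself does not give a proof but simply defers to \cite[Lemma~A.1]{Du_Kakade_Wang_Yan_2019}, which is the same Johnson--Lindenstrauss-style probabilistic argument, so your write-up is essentially the standard proof that the citation points to.
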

\begin{proof}
See the proof of \cite[Lemma A.1]{Du_Kakade_Wang_Yan_2019}.
\end{proof}

To use the lemma we choose the tuning parameter $\eta>0$ and set 
\begin{align}
\gamma=(d-1)^{-\frac{1}{2}+\eta}\,.
\label{eq:gammdef}
\end{align}
Then, the conditions of the lemma are satisfied with $d'=d-1$ and
\begin{align}
k=\floor{e^{\tfrac{(d-1)^{2\eta}}{8}}}\,.
\label{eq:kdef}
\end{align}
While $\gamma$ in general should be thought of as a small value, 
it will be useful to put a specific upper bound on it.\footnote{$\gamma$ is not to be confused with the discounting factor. When dealing with discounted MDPs (\cref{sec:discounting}), we denote the discount factor by $\alpha$.}
As it turns out, the following constraint will be particularly useful:
\begin{align}
\gamma\le1/4\,.
\label{eq:gamma_qb}
\end{align}
To satisfy this constraint, we restrict the range of $\eta$: 
\begin{align}
0<\eta \le \frac{1}{2}-\frac{2}{\log_2(d-1)}\,.
\label{eq:etarange}
\ass
\end{align}
Note that owing to $d\ge 17$, 
the range of $\eta$ is nonempty
and one can indeed verify that for $\eta$ in this range, $\gamma\le1/4$ indeed holds. Here, and in what follows, we use the convention of putting the letter `A' in front of an equation number to mark those relations that remain to be shown.

Let $(v_a)_i$ be the set of $d-1$-dimensional vectors on the $\ell^2$-sphere whose existence is guaranteed by Lemma~\ref{lem:jl}. 
Let $(\sigma_{s,a})_{s\in \cT,a\in \cA,a\not\in s}$  be positive constants to be chosen later.
Let $c_1,\dots,c_{H}$ be constants defined as follows:
\begin{align}
c_h &= \frac{1}{2}+\frac{1+\gamma}{2}\sum_{l=1}^{H-h} \dhpow{l}\,.
\end{align}
Note that empty sums are defined to be zero, hence $c_H = 1/2$. 
With this, the feature map $\left(\phi\right)_{h\in[H]}$, $\phi_h:\SA \to \R^d$ is defined as follows: for $h\in[H]$, $s\in \cS$, and $a\in \cA$, 
\begin{align}
\phi_h(s,a) = 
\begin{cases}
\zeros\,, & \text{if } s\in\cF \text{ or } \left(s \in \cT \text{ and } a\in s \right)\,;\\
\left(c_{h}, \,\dhpows{H-h+1}\sigma_{s,a} v_a^\top \right)^\top\,, & \text{otherwise}\,.
\end{cases}\label{eq:phi-choice}
\end{align}
Note that $\phi$ is well-defined: the second branch applies only if $s\in \cT$.
We will see that $\sigma_{s,a}$ satisfies 
\begin{align}\label{ineq:sigma-bounds}
\gamma\le\sigma_{s,a}\le 1. 
\ass
\end{align}
The ideas behind the constants $c_{h}$, scaling with $\sigma_{s,a}$, and why we require it to be within this range is explained once the MDP is fully defined, in Remark~\ref{rem:intuition-on-c-and-sigma}.
The high level goal of this construction is to make sure that realizability holds,
$a^\star$ is always optimal, and the bias term $c_{h}$ will ensure that at the first stage we have a large action gap, while the identity of $a^\star$ remains hidden. 

\subsection{Transitions, parameter vector and rewards}
\label{sec:trans}
We now construct a family of MDPs $(M_{a,\epsilon})_{a\in \cA,\epsilon>0}$ with state space $\cS$ and action set $\cA$.
Here, $\epsilon$ is a parameter whose value we will choose later (the role of $\epsilon$ is to allow some rewards in the MDP to be randomized with ``signal-to-noise ratio'' $\mathcal{O}(\epsilon)$).

For $a^\star\in [k]$ and $\epsilon>0$ fixed,
the transition and rewards in the MDP $M_{a^\star,\epsilon}$ are as follows:
The state transitions are deterministic.
Once a game-over state $f_h$ is reached with $2\le h \le H$,
the agent can only transition to the next game-over state $f_{h+1}$ regardless of the action taken.
State $f_{H+1}$ is an absorbing state: Any action taken here leads to $f_{H+1}$.
Consider now a tree state $s\in \cT$ at level $1\le h\le H-1$.
Taking action $a\in \cA$ in $a$ leads to  $f_{h+1}$
if either $a=a^\star$ or $a\in s$.
Taking any other action leads to the next tree node, $sa\in \cA^{h+1}$.
When in a leaf node, that is when $s\in \cA^{H-1}$, taking any action leads to $f_{H+1}$.
Formally, letting $g:\SA \to \S$ denote the function that gives the next state for any given state-action pair, we have
\begin{align}
g(s,a) = 
	\begin{cases}
	f_{H+1}\,, & \text{if } |s| \ge H\,;\\
	f_{|s|+1}\,, & \text{if } |s|<H \text{ and } \left(s\in \cF\setminus \{f_H\} \text{ or } a=a^\star \text{ or } a\in s \right)\,;\\
	s a\,, & \text{otherwise}\,.
	\end{cases}\label{eq:state-transition-def}
\end{align}

The reward structure is dictated by the choice of the parameter vector:
\begin{align}
\theta^\star = \epsilon\left(1, \, v_{a^\star}^\top \right)^\top\,.
\label{eq:thchoice}
\end{align} %
Let the reward distribution given a state $s$ and action $a$ be $\cR_a(\cdot|s)$.
Then, using $\delta_{x}$ to denote the Dirac distribution with point mass at $x$,
\begin{align}
\cR_a(\cdot|s) = 
	\begin{cases}
	\delta_{\ip{\phi_{|s|}(s,a),\theta^\star}}\,, & \text{if } s\in \cT \text{ and } a = a^\star\,;\\
	\text{Ber}(\mu_a(s))\,, & \text{if } s\in \cA^{H-1} \text{ and } a \ne a^\star\,;\\
	\delta_0\,, & \text{otherwise}\,.
	\end{cases}\label{eq:reward-def}
\end{align}
In words, the reward is deterministically zero 
when either the state is a game-over state or the state is a non-leaf tree state
but the action is not $a^\star$.
When the action is $a^\star$, the reward is deterministic and is equal to $\ip{\phi_{|s|}(s,a),\theta^\star}$.
Finally, for any other action $a\ne a^\star$ in a leaf tree state $s\in \cA^{H-1}$,
the reward is drawn from a Bernoulli distribution with mean $\mu_a(s)$, which is chosen as follows:
\begin{align}
\label{eq:mu-def}
\mu_a(s) & = \epsilon\sigma_{s,a} \, \dhpows{} \ipg{ v_a ,v_{a^\star}} + \epsilon/2\,.
\end{align}
Note that the value of $\mu_a(s)$ is selected such that for $s\in \cA^{H-1}$ and $a \ne a^\star$, $\mu_a(s)=\ip{\phi_{|s|}(s,a),\theta^\star}$.

To make the rewards well-defined, we need $\mu_a(s)\in [0,1]$.
For this, let us show that
\begin{align}\label{ineq:bernoulli-small}
0\le \mu_a(s) \le \epsilon\,.
\end{align}
Let $x =  \epsilon\sigma_{s,a} \dhpows{}\ipg{ v_a ,\, v_{a^\star}}$.
Using that $|\sigma_{s,a}|\le 1$ (by Eq.~(\ref{ineq:sigma-bounds})), and the upper bound in $\ip{v_a,v_{a^\star}}$ for $a\ne a^\star$ from  Lemma~\ref{lem:jl}, we get
$|x|\le \epsilon \gamma \left(\frac{1}{2\gamma}-\frac{1}{2}\right)\le \epsilon/2$. This proves Eq.~(\ref{ineq:bernoulli-small})
under the bound on $\sigma_{s,a}$.
Hence, the rewards are well-defined as long as
\begin{align}
\epsilon\le 1
\label{eq:eps-less-than-one}
\ass
\end{align}
holds.

Figure~\ref{fig:mdp-illustration} illustrates this MDP. 
To summarize, the \emph{only} times that rewards are given are either when $a^\star$ is played, or when any action is played in a non-game-over state at the \emph{final} stage.

Note that by construction, there can only be one non-zero reward received in any episode, whether this reward is given as a result of playing $a^\star$ (case 1), or because we are in the final stage (case 2) (to satisfy realizability according to (Assumption~\ref{ass:realizability}), the expectation of it will be the inner product between the features and weight vector ($\ip{\varphi_h(s,a),\theta^\star}$) to be constructed momentarily.
In case 1, this reward is deterministically a constant, whereas in case 2, it will be the sum of a constant and a Bernoulli with very low expectation.
At a high level, the feature map does not depend on $a^\star$, and unless $a^\star$ is played at some stage, none of the deterministic transitions depend on $a^\star$, so the agent can only learn about $a^\star$ through the low-expectation Bernoulli rewards.

\begin{figure}
\centering
\includegraphics[width=0.5\textwidth]{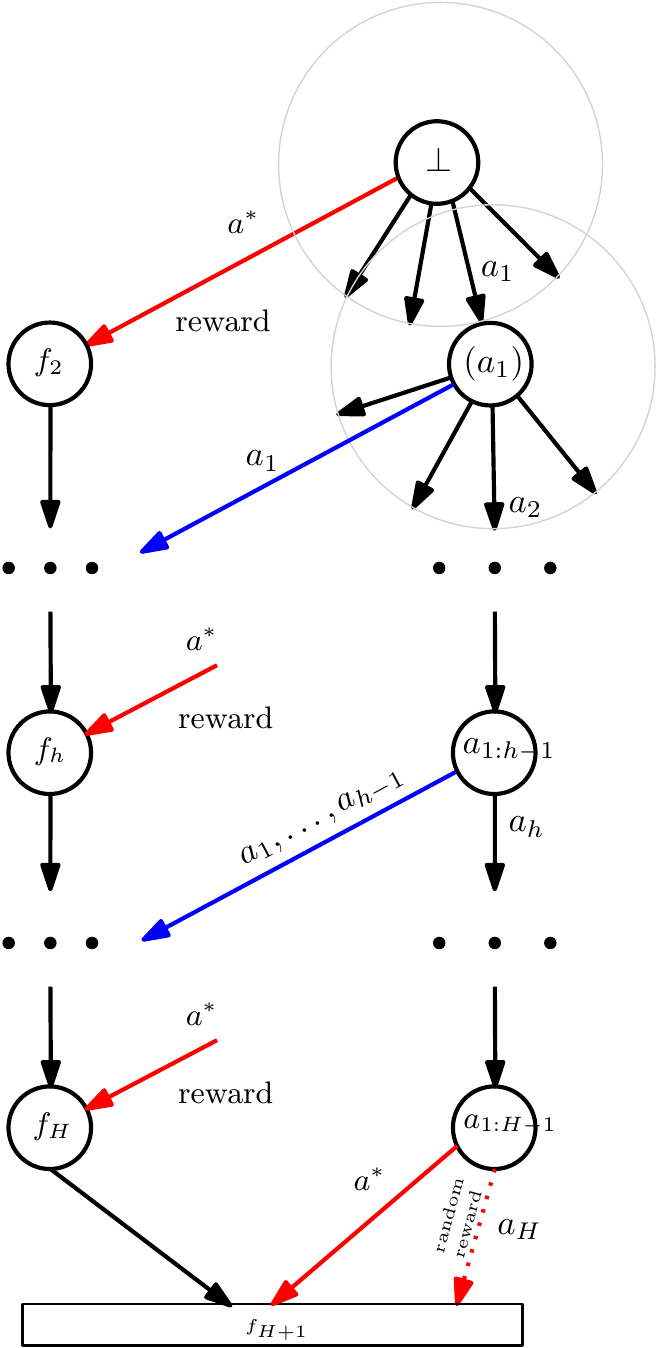}
\caption{Illustration of the MDP $M_{a^\star}$. 
The states on the right belong to the ``tree'': These are uniquely determined by sequences of actions.
The dynamics is deterministic. The initial state $s_1 = \bot$ and this is the state that is given as the input $\stin$ to the planner. Action $a^\star$, or actions that are repeated cause the next state to be a ``game-over state''. In these states (labeled by $f_2,\dots,f_{H+1}$, no rewards are obtained and the next state in $f_i$ is $f_{i+1}$ regardless of the action (except for $f_{H+1}$, where the next state is $f_{H+1}$). The difference between taking a repeated and the action $a^\star$ is that $a^\star$ gives a positive (large) reward, while repeated actions do not give rewards. 
The rewards in the bottom ensure that realizability holds but they are random and make the detection of $a^\star$ hard.
For more details, see the main text.
}\label{fig:mdp-illustration}
\end{figure}

\subsubsection{Defining $\sigma_{s,a}$ and satisfying Eq.~(\ref{ineq:sigma-bounds})}

We now define $(\sigma_{s,a})_{s \in \cT, a\not\in s}$.
For $a\in\cA$, let $\sigma_{\bot,a}=1$, and for $h>1$, $sa\in \cA^{h-1}\cap \cT$ with $a,a'\in \cA$ and $a'\not\in sa$, 
let
\begin{align}\label{eq:sigma-def}
\sigma_{sa,a'}=\sigma_{s,a}\tfrac{1-\gamma}{2\gamma}\ip{v_a,v_{a'}}+\tfrac{1+\gamma}{2} \,. 
\end{align}

To prove that Eq.~(\ref{ineq:sigma-bounds}) holds for all $s\in \cT$ and $a\in \cA$ with $a\not\in s$, 
we use induction on $h=|s|$. 
For $h=1$, $\sigma_{s,a}=1$ by definition. 
Let $h>1$ now and assume that Eq.~(\ref{ineq:sigma-bounds}) holds for any $s\in \cA^{h-1}\cap \cT$.
Take any $s'\in \cA^h \cap \cT$ and write it as $s' = sa$ with $s\in \cA^{h-1}$ and $a\in \cA$.
By assumption, $a\not\in s$.
Hence, by the induction hypothesis, $0\le \sigma_{s,a}\le 1$.
We have $\sigma_{sa,a'} = \sigma_{s,a} \ip{v_{a},v_{a'}} \frac{1-\gamma}{2\gamma} +\frac{1+\gamma}{2}$.
Since $a\ne a'$, $-\gamma \le \ip{v_a,v_{a'}}\le \gamma$.
Thus, $-\gamma \le  \sigma_{s,a} \ip{v_{a},v_{a'}}  \le \gamma$. 
Plugging this into the expression of $\sigma_{sa,a'}$ gives $\gamma \le \sigma_{sa,a'} \le 1$, proving the induction hypothesis for $s'$ and $a'$ and hence the desired claim follows by induction.

\begin{remark}\label{rem:intuition-on-c-and-sigma}
The reason  for choosing the value of $\sigma_{sa,a'}$ is to satisfy the Bellman equations as follows: if a suboptimal action $a\ne a^\star$ is played at state $s$ of stage $h$, $\phi_{h+1}(sa,a')$ 
for any next-state action $a'$ is scaled such that we pretend $a'$ is the optimal action, and make $\ip{\phi_{h+1}(sa,a'),\theta^{a'}}=\ip{\phi_h(s,a),\theta^{a'}}$ 
(where $\theta^{a'}$ is the value that $\theta^\star$ would take if $a^\star$ was $a'$). %
Crucially, the scaling does not depend on the optimal action, so reveals no information about it to the learner: any action can be the optimal one, and each $\sigma_{sa,a'}$ is calculated as if $a'$ was the optimal one.

We have to ensure $\sigma_{sa,a'}$ is in some range (Eq.~(\ref{ineq:sigma-bounds})), 
or else such scaling could change what the optimal action is at some stage,
leading to $\ip{\phi_{h+1}(sa,a'),\theta^\star}>\ip{\phi_h(s,a),\theta^\star}$.
Keeping $\sigma_{s,a}$ in the required range is indeed (indirectly) the role of $c_h$. In particular, if $c_h=0$, the above scaling breaks the Bellman equations. 
Furthermore, to see why $c_h=0$ fails for any scaling $\sigma_{s,a}$ that satisfies the Bellman equation, let us suppose our planner chooses an action $a$ from the starting state such that $\ip{v_a,v_{a^\star}}<0$ (in our construction this holds for a significant fraction of actions).
This leads to features $\phi_{h+1}(sa,a')$ for any $a'$ that also have a negative inner product with $\theta^\star$ (due to the Bellman equation). 
This reveals much information about $v_{a^\star}$, and therefore also about the identity of $a^\star$, 
while our proof strongly relies on extremely little information being revealed about $a^\star$.
\end{remark}

\subsubsection{Bounding the support of random rewards and satisfying Eq.~(\ref{eq:eps-less-than-one})}
\label{sec:rewbound}
We show that Assumption~\ref{ass:bounded-mdp}, that is that the support of rewards lies in $[0,1]$, 
is satisfied for $M_{a^\star,\epsilon}$. 

In the definition of rewards, there are three cases: The reward is either identically zero or a Bernoulli (in which cases there is nothing to be proven), or it is a deterministic value. In the latter case, the optimal action is taken ($a=a^\star$) in a tree state ($s\in \cT$). 
In this case the rewards take on the value $r_{a^\star}(s) = \ip{ \phi_{|s|}(s,a^\star),\theta^\star}$ deterministically.
Plugging in the definitions,
\begin{align}
\ip{ \phi_{|s|}(s,a^\star),\theta^\star} 
= 
\epsilon c_{|s|} +\sigma_{s,a^\star} \epsilon \dhpow{H-|s|+1}\,.
\label{eq:optrew}
\end{align}

Notice that $\frac{1-\gamma}{2\gamma}\ge\frac32$ (cf. Eq.~(\ref{eq:gamma_qb})).
Therefore, both terms are positive (hence $r_{a^\star}(s)\ge 0$), and decreasing as $|s|$, the level of the state, increases.
It remains to show the upper bound on these rewards. By monotonicity,
the largest value in Eq.~(\ref{eq:optrew}) will be less than one provided
\begin{align}
\epsilon\left( \frac12 + \frac{1+\gamma}{2}\sum_{l=1}^{H-1}\dhpow{l} + \dhpow{H}\right) &\le 1 \,.
\label{eq:egbound}
\end{align}
holds. As $\frac12 + \frac{1+\gamma}{2}\sum_{l=1}^{H-1}\dhpow{l} + \dhpow{H} < \sum_{l=0}^H \dhpow{l}$, this holds if 
\begin{align}
\epsilon \, \sum_{l=0}^H  \dhpow{l} \le 1\,.
\label{eq:epsilon-not-too-big}
\ass
\end{align}
Finally note that 
Eq.~(\ref{eq:epsilon-not-too-big}) implies that $\epsilon\le1$ (i.e., Eq.~(\ref{eq:eps-less-than-one})).

\subsection{Showing realizability}
\label{sec:real}
The goal of this section is 
to show that $(M_{a^\star,\epsilon}, \varphi)$ is $q^\star$-realizable  according to Definition~\ref{def:realizable-mdp} %
and the parameter $\theta^\star$ in the realizability definition can be chosen as shown in Eq.~(\ref{eq:thchoice}).
Together with Section~\ref{sec:rewbound}, this implies that $(M_{a^\star,\epsilon}, \varphi)\in\mathcal{M}_{H,d}$.

Let the optimal action-value function of $M_{a^\star,\epsilon}$ be $q^\star$. 
We show realizability (cf. Eq.~(\ref{eq:realizability})) by showing $q_h(s,a):=\ip{\phi_h(s,a),\theta^\star}$ satisfies the Bellman equations (Eqs.~(\ref{eq:bellman1}) and (\ref{eq:bellman2})). By uniqueness (Section~\ref{sec:prelim}), this implies $q_h=q^\star_h$.

Take $s\in \cS$, $a\in\cA$, and let $h=|s|$. We prove the statement by a case-analysis.
One of the following cases hold:
\begin{description}
\item[Case 1] $s$ is a game-over state ($s\in\cF$) or $a\in s$;
\item[Case 2] $s$ is not game-over ($s\in\cT$) and $a\not\in s$.
\end{description}
For Case 1, the Bellman equations trivially hold as $q_h(s,a)=0$ and no reward is given (neither immediately, nor for the rest of the episode). 
We subdivide Case 2 into three sub-cases:
\begin{description}
\item[Case 2.1] $a=a^\star$;
\item[Case 2.2] $h=H$ and $a\ne a^\star$;
\item[Case 2.3] $h<H$ and $a\ne a^\star$.
\end{description}
For Cases 2.1 and 2.2, there is an immediate reward given, the expectation of which matches $q_h(s,a)$ by definition. 
In both cases the next state, $g(s,a)$, is a game-over state.
Then, thanks to Case 1, $v_{h+1}(g(s,a))=0$, so the Bellman equation is satisfied in these cases.

What is left to show is that the Bellman equation is satisfied for Case 2.3.
In Case 2.3, $s\in \cT$, $a\not\in s$,  $h< H$, and $a\ne a^\star$. %
We show that the Bellman equation holds in two parts:
first, we will show that $a^\star$ is the optimal next-state action:
\begin{align}\label{eq:bellman-star-stays-optimal}
a^\star\in\argmax_{a'\in\cA} q_{h+1}(sa, a') \,.
\end{align}
Second, $q_h$ at $(s,a)$ satisfies the Bellman equation:
\begin{align}\label{eq:bellman-satisfied}
q_h(s, a) = \underbrace{r_a(s)}_{=0}+q_{h+1}(sa,a^\star) \,,
\end{align}
where we used Eq.~(\ref{eq:bellman-star-stays-optimal}).

To show Eq.~(\ref{eq:bellman-satisfied}), it suffices to show that
\begin{align*}
\ip{\theta^\star, \varphi_{h}(s,a)} 
- \ip{\theta^\star, \varphi_{h+1}(sa,a^\star)}=0\,.
\end{align*}
Plugging in definitions (noting that under Case 2.3 $\varphi_h(s,a)\ne \bm{0}$ and $\varphi_{h+1}(sa,a^\star)\ne \bm{0}$), we get that
\begin{align*}
\MoveEqLeft
\ip{\theta^\star, \varphi_{h}(s,a)} 
- \ip{\theta^\star, \varphi_{h+1}(sa,a^\star)}\\
&=
\left ( \epsilon c_{h} + \epsilon\dhpows{H-h+1} \sigma_{s,a}\ip{v_{a^\star}, v_a} \right)
- 
\left( \epsilon c_{h+1} + \epsilon\dhpows{H-h} \sigma_{sa,a^\star} \ip{v_{a^\star},  v_{a^\star}} \right)  \\
& =
\tfrac{1+\gamma}{2} \epsilon \dhpows{H-h} +
\epsilon\dhpows{H-h+1} \sigma_{s,a} \ip{v_{a^\star},  v_a}
-  %
\epsilon\dhpows{H-h}\sigma_{sa,a^\star}\\
& = 
\epsilon \dhpows{H-h}
\left\{
\tfrac{1+\gamma}{2} +
\epsilon\dhpows{} \sigma_{s,a} \ip{v_{a^\star},  v_a}
-
\sigma_{sa,a^\star} \right\}\,.
\end{align*}
Plugging in the definition of $\sigma_{sa,a^\star}$ gives the desired result.

To prove Eq.~(\ref{eq:bellman-star-stays-optimal}), 
pick any $a'\in \cA$ such that $a'\ne a^\star$. Our aim is to show that $q_{h+1}(sa, a^\star)\ge q_{h+1}(sa,a')$.
If $a'\in sa$, then $q_{h+1}(sa,a')=0$, and we are done 
because we have already shown that all the rewards are nonnegative (in fact,
we have even shown that Assumption~\ref{ass:bounded-mdp} holds) and
thus $q_{h+1}(sa,a^\star)\ge0$.
In the remaining case, and since $sa$ is not a game-over state under Case 2.3, $\phi_{h+1}(sa,a')\ne \bm{0}$ and $\phi_{h+1}(sa,a^\star)\ne \bm{0}$, and we can substitute their values as:
\begin{align*}
\MoveEqLeft q_{h+1}(sa,a^\star)-q_{h+1}(sa,a') \\
& =
\left( \epsilon c_{h+1} + \epsilon\dhpows{H-h} \sigma_{sa,a^\star} \ip{v_{a^\star}, v_{a^\star}} \right)
-
\left( \epsilon c_{h+1} + \epsilon\dhpows{H-h}\sigma_{sa,a'}\ip{v_{a^\star}, v_{a'}} \right) \\
&= 
\epsilon\dhpows{H-h} \left(\sigma_{sa,a^\star}
-
\sigma_{sa,a'}\ip{v_{a^\star}, v_{a'}}\right)\,.
\end{align*}
The right-hand side here is nonnegative as $|\ip{v_{a^\star},v_{a'}}|\le \gamma$ (due to $a^\star\ne a'$ and Lemma~\ref{lem:jl}), $\sigma_{sa,a'}\le 1$, and $\sigma_{sa,a^\star}\ge \gamma$ (by Eq.~(\ref{ineq:sigma-bounds})).
This finishes the proof that the Bellman equations hold in Case 2.3 and thus they hold for all state-action pairs. 
As such, $q_h=q^\star_h$, which shows that $q^\star_h$ is indeed realizable.

\subsection{Action gap lower bounds (Eq.~(\ref{eq:subgaps}))
and satisfying Eq.~(\ref{eq:epsilon-not-too-big})
}
\label{sec:showing-hardness}
Let us now turn to showing Eq.~(\ref{eq:subgaps}) with $s_1=\bot$,
that is, that for $a\ne a^\star$, the action gaps
\begin{align*}
\Delta^\star_1(\bot,a)=\ip{\phi_1(\bot,a^\star)-\phi_1(\bot,a),\theta^\star}
\end{align*}
are bounded from below by $1/4$.

Plugging in the definitions of $\phi_1$ and $\theta^\star$, we get
\begin{align*}
\ip{\phi_1(\bot,a^\star)-\phi_1(\bot,a),\theta^\star}
& = \epsilon \dhpow{H} (1-\ip{v_{a^\star},v_a})
\ge \epsilon \dhpow{H} (1-\gamma)\\
& \ge \frac34 \epsilon \dhpow{H} \,,
\end{align*}
where the first inequality follows from the choice of $(v_a)_a$
and the second follows because $\gamma\le1/4$ (cf. Eq.~(\ref{eq:gamma_qb})).
To get $\Delta^\star_1(\bot,a)\ge\frac{1}{4}$, we set 
\begin{align}
\epsilon=\frac{1}{3}\dhpow{-H} \,.
\label{eq:epdef}
\end{align}

Let us now show that with this choice of $\epsilon$, Eq.~(\ref{eq:epsilon-not-too-big}) is also satisfied. 
For this let $x=\frac{1-\gamma}{2\gamma}$. As before, $x\ge1.5$ due to $\gamma\le\frac{1}{4}$. 
Hence,
\begin{align*}
\epsilon\,\sum_{l=0}^H  \dhpow{l} 
= \frac{1}{3} x^{-H} \sum_{l=0}^H x^l=\frac{1}{3}\sum_{l=0}^H x^{-l}
<\frac{1}{3}\sum_{l=0}^\infty x^{-l} 
=\frac{1}{3}\cdot\frac{1}{1-x^{-1}}
\le\frac{1}{3}\cdot\frac{1}{1-\frac{2}{3}}=1.
\end{align*}

\subsection{Constructing $M_{0,\epsilon}$}
\label{sec:m0def}

Let $M_{0,\epsilon}$ be an MDP with the same state and action spaces as $M_{a^\star,\epsilon}$. 
The transitions will be as with $M_{a^\star,\epsilon}$ too, except there is no special action $a^\star$ that leads to a game-over state.
Formally, letting $g_0:\SA \to \S$ denote the function that gives the next state for any given state-action pair, we have
\begin{align*}
g_0(s,a) = 
    \begin{cases}
    f_{H+1}\,, & \text{if } |s| \ge H\,;\\
    f_{|s|+1}\,, & \text{if } |s|<H \text{ and } \left(s\in \cF\setminus \{f_H\} \text{ or } a\in s \right)\,;\\
    s a\,, & \text{otherwise}\,.
    \end{cases}
\end{align*}
Let all reward distributions for all states and actions be $\cR_a(\cdot|s) = \delta_0$ (ie. rewards are always zero).
Note that this differs from the rewards of $M_{a^\star,\epsilon}$ only when action $a^\star$ is played or when a Bernoulli reward is given by $M_{a^\star,\epsilon}$ at stage $H$.

\subsection{Showing Eq.~(\ref{eq:problb})}
\label{sec:similarity}

Fix an arbitrary planner $\cP$ and $a\in \cA$
and let the distributions induced by the interconnection of $\cP$ 
and $M_{a,\epsilon}$ ($\cP$ and $M_{0,\epsilon}$) be denoted by $\bbP_a$ ($\bbP_0$, respectively).
We claim that for any $n\ge 1$,
\begin{align}
\bbP_a(a\not \in A_{1:n}) \ge (1-\epsilon)^n \bbP_0(a\not\in A_{1:n})
\label{eq:epsproblb}
\end{align}
holds. 
Fix $n\ge 1$.
Notice that the 
support of the reward distributions 
in either $M_{a,\epsilon}$ or $M_{0,\epsilon}$ is finite. Let $\cW$ be this set.
By the law of total probability,
\begin{align*}
\MoveEqLeft
\bbP_a(a\not\in A_{1:n}) \\
&= 
\sum_{a_{1:n}\in (\cA\setminus \{a\})^n}
\sum_{\substack{r_{1:n}\in \cW^n\\
s_{1:n},s'_{1:n}\in \cS^n\\
c_{1:n}\in \{0,1\}^n}}
\bbP_a( 
A_{1:n}=a_{1:n},
R_{1:n}=r_{1:n},
S_{1:n}=s_{1:n},
S_{1:n}'=s_{1:n}',
C_{1:n}=c_{1:n}
)
\end{align*}
and the same holds for $\bbP_0$.
Hence, it suffices to show that for any fixed 
$a_{1:n}\in (\cA\setminus \{a\})^n$,
$r_{1:n}\in \cW^n$, $s_{1:n},s'_{1:n}\in \cS^n$ 
and $c_{1:n}\in \{0,1\}^n$
such that 
$\bbP_0( 
A_{1:n}=a_{1:n},
R_{1:n}=r_{1:n},
S_{1:n}=s_{1:n},
S_{1:n}'=s_{1:n}',
C_{1:n}=c_{1:n}
)\ne 0
$,
\begin{align*}
\kappa:=
\frac
{\bbP_a(
A_{1:n}=a_{1:n},
R_{1:n}=r_{1:n},
S_{1:n}=s_{1:n},
S_{1:n}'=s_{1:n}',
C_{1:n}=c_{1:n}
)}
{\bbP_0(
A_{1:n}=a_{1:n},
R_{1:n}=r_{1:n},
S_{1:n}=s_{1:n},
S_{1:n}'=s_{1:n}',
C_{1:n}=c_{1:n}
)}
\ge (1-\epsilon)^{n}
\end{align*}
holds.
By construction (see Section~\ref{sec:inducedpd}), 
both the numerator and the denominator factorizes into the product of $n$ terms.
In fact, in both the numerator and the denominator, the terms coming from the planner $\cP$ are identical and hence cancel.
Let $g$ be the transition function in $M_{a,\epsilon}$.
By our assumption that the probability in the denominator is nonzero and because
 $a\not\in a_{1:n}$ we have $s_t'=g(s_t,a_t) = g_0(s_t,a_t)$. Hence,
\begin{align*}
\kappa = \prod_{t=1}^n \frac
{\bbP_a(R_t=r_t \mid S_t=s_t,A_t=a_t)}
{\bbP_0(R_t=r_t \mid S_t=s_t,A_t=a_t)}
\,.
\end{align*}
For $a'\ne a$, the reward distributions in $M_a$ and $M_0$ only differ when $s\in \cA^{H-1}$.
Thus, the ratio above is one unless $s_t\in \cA^{H-1}$.
Now, since $\bbP_0(R_t=r_t|S_t=s_t,A_t=a_t)> 0$, 
 $r_t = 0$. Therefore,
 $\bbP_a(R_t=r_t|S_t=s_t,A_t=a_t) \ge 1-\epsilon$ by Eq.~(\ref{ineq:bernoulli-small}).
 Hence, $\kappa \ge (1-\epsilon)^{n}$, as required, finishing the proof of Eq.~(\ref{eq:epsproblb}).

It remains to be shown that Eq.~(\ref{eq:problb}) holds for a suitable choice of $n$.
In fact, we we choose
\begin{align}
n = \left\lfloor \frac{k}{4} \wedge \left(\frac1\epsilon-1\right)/3.5 \right\rfloor\,.
\label{eq:nchoice}
\end{align}
Note that by Eq.~(\ref{eq:eps-less-than-one}), the second term in the minimum above is nonnegative.
On the one hand, this choice of $n$ implies that $n\le \lfloor k/4 \rfloor$, as required by 
Lemma~\ref{lem:mainlemma}.
Furthermore, $n\le \left(\frac1\epsilon-1\right)/3.5$ and hence, $1-\epsilon \ge 1- \frac{1}{1+3.5 n}$.
Hence,
\begin{align*}
(1-\epsilon)^n \ge \left(1- \frac{1}{1+3.5 n}\right)^n \ge \lim_{n\to\infty}  \left(1- \frac{1}{1+3.5 n}\right)^n > 3/4\,.
\end{align*}

\subsection{Proving Theorem~\ref{thm:lb}}
\label{sec:mainproof}
Let us now collect the choice of the parameters.
We have for $d\ge 18$, $0<\eta \le \frac{1}{2}-\frac{2}{\log_2(d-1)}$ (cf. Eq.~(\ref{eq:etarange})):
\begin{align}
\gamma & =(d-1)^{-\frac{1}{2}+\eta}\, \label{eq:gamma-choice}
\\
k            & =\floor{e^{\tfrac{(d-1)^{2\eta}}{8}}}\,, \label{eq:k-choice}
 \text{ and}\\
\epsilon & =\frac{1}{3}\dhpow{-H} \label{eq:eps-choice}
\end{align}
(cf. Eqs.~(\ref{eq:kdef}), (\ref{eq:gammdef}), and (\ref{eq:epdef})).
We have shown that the conditions of Lemma~\ref{lem:mainlemma} are satisfied when $n$ is set as shown in
Eq.~(\ref{eq:nchoice}). Thus, the theorem follows from the conclusion
of this lemma and 
Proposition~\ref{prop:delta-convert-tilde-delta},
once we lower bound $n$.
For this we have
\begin{align*}
n= \floor{k/4 \wedge (\epsilon^{-1}-1)/3.5}
&\ge\left(\exp\left((d-1)^{2\eta}/8\right)/4 \wedge \left(3\dhpow{H}-1\right)/3.5\right)   - 2\\
&=\left(\exp\left((d-1)^{2\eta}/8\right)/4 \wedge \frac{3}{3.5}\left( \frac{1}{2}(d-1)^{\frac{1}{2}-\eta}-\frac{1}{2} \right)^H-\frac{1}{3.5}\right)   - 2\\
&=\Omega\left(\min\left\{ \exp\left((d-1)^{2\eta}/8\right) , 2^{-H} d^{H\left(\frac{1}{2}-\eta\right)} \right\}\right)
\,,
\end{align*}
finishing the proof of the Theorem~\ref{thm:lb}.
\qed

\subsection{Lower bound for the discounted MDP case}\label{sec:discounting}

The construction and proof presented can be applied with minor modifications to the discounted MDP setting, which we briefly discuss here. In this setting, instead of maximizing the expected sum of rewards over a finite horizon $H$ (ie. $\E_\pi\left[\sum_{t=h}^H r_{A_t}(S_t) \mid S_h = s \right]$), the goal of the agent is to maximize the expected total discounted reward for some ``discount factor'' $0<\alpha<1$: $\E_\pi\left[\sum_{t=h}^\infty \alpha^{t-h}r_{A_t}(S_t) \mid S_h = s \right]$. 
Usually, this discount factor is close to $1$, and to make our results translate into this setting, we further assume that 
\[
\frac23\le \alpha < 1 \,.
\]
The Bellman equation Eq. (\ref{eq:bellman1}) is redefined to reflect this as:
\begin{align}
q^\star_h(s,a) &= r_a(s) + \alpha\int v^\star_{h+1}(s') P_a(ds'|s)  \,.\label{eq:bellman1-discounted}
\end{align} 
Fix some $H\ge1$ integer. For some action $a^\star$, let the MDP $M'_{a^\star, \epsilon}$ have the same states, actions, and transition structure (Eq. (\ref{eq:state-transition-def})) as $M_{a^\star, \epsilon}$.
Choose parameters $\gamma, k, \epsilon$ as before (Eqs.~(\ref{eq:gamma-choice}), (\ref{eq:k-choice}), and (\ref{eq:eps-choice})).
We define the feature map for this MDP $\phi'$ for $h\in[H],\,s\in\cS,\,a\in\cA$ as:
\[
\phi'_h(s,a)=\alpha^{-h+1}\phi_h(s,a)\,,
\]
where $\phi_h(s,a)$ is defined as before (Eq. (\ref{eq:phi-choice})), noting that for $h>H$, the agent is always in the absorbing stage $f_{H+1}$, so we simply define $\phi'_h(s,a)=\zeros$ for any $h>H$.
Let us redefine the final-stage rewards of Eq. (\ref{eq:mu-def}) to accommodate the discounting:
\begin{align*}
\label{eq:mu-def}
\mu_a(s) & = \alpha^{-H+1}\left(\epsilon\sigma_{s,a} \, \dhpows{} \ipg{ v_a ,v_{a^\star}} + \epsilon/2\right)\,.
\end{align*}
Note that as before, we have $\mu_a(s)=\ip{\phi'_{|s|}(s,a),\theta^\star}$ for $s\in\cA^{H-1}$ and $a\ne a^\star$.
Let us otherwise keep the reward distribution the same, according to Eq. (\ref{eq:reward-def}), using the new definition of $\mu_a(s)$.
It is easy to see that realizability in $M'_{a^\star, \epsilon}$ still holds, as we negated the effect of discounting in the Bellman equation with our new definitions.

Let us turn to showing that the rewards are still in $[0,1]$ and that $\mu_s(a)\in[0,1]$.
Notice that by the same argument as before, all rewards and $\mu_s(a)$ are non-negative, so we only need to show boundedness from above by 1.
Furthermore, as there can only be up to one non-zero reward in any episode, it suffices to show an upper bound on $q^\star_h(s,a)$ for any $s\in\cS,\,a\in\cA$, and $h=|s|$:
\[
\ip{\phi'_{h}(s,a),\theta^\star}=\epsilon \alpha^{-h+1}\left(c_h + \sigma_{s,a}\dhpow{H-h+1}\ip{v_a,v_{a^\star}}\right)\le 1\,.
\]
In two parts, first we show that
\begin{align*}
\epsilon \alpha^{h-1} c_h 
&= \epsilon \alpha^{-h+1}\left(\frac12 + \frac{1+\gamma}{2}\sum_{l=1}^{H-h}\dhpow{l} \right) 
< \epsilon \alpha^{-h+1} \frac{1+\gamma}{2}\left(\sum_{l=0}^{H-h}\dhpow{l} \right) \\
&\le \epsilon \alpha^{-h+1} \frac{1+\gamma}{2} 2 \dhpow{H-h+1} 
= \epsilon \left(\alpha\dhpow{}\right)^{-h+1} (1+\gamma) \dhpow{H} \\
&= \frac13 \left(\alpha\dhpow{}\right)^{-h+1}(1+\gamma) 
\le \frac{5}{12} \left(1.5\alpha\right)^{-h+1} < \frac12\,,
\end{align*}
where we repeatedly used that $\dhpow{}\ge 1.5$ (due to $\gamma\le\frac{1}{4}$) and $\alpha\ge \frac23$, and substituted the value of $\epsilon$.
Second, we show for the second term of the sum that
\begin{align*}
\epsilon \alpha^{-h+1} \sigma_{s,a}\dhpow{H-h+1}\ip{v_a,v_{a^\star}} 
&\le \epsilon \alpha^{-h+1} \dhpow{H-h+1}\\
&= \epsilon \left(\alpha\dhpow{}\right)^{-h+1} \dhpow{H}
\le \frac13 \left(1.5\alpha\right)^{-h+1}
< \frac12\,,
\end{align*}
where we used the bound on $\sigma_{s,a}$ (Eq. (\ref{ineq:sigma-bounds})).

The discounting also affects Eqs. (\ref{ineq:bernoulli-small}) and (\ref{eq:epsproblb}), which respectively become:
\begin{align*}
0\le \mu_a(s) \le \alpha^{-H+1}\epsilon\,
\end{align*}
~\vspace{-8mm}
\begin{align*}
\bbP_a(a\not \in A_{1:n}) \ge (1-\alpha^{-H+1}\epsilon)^n \bbP_0(a\not\in A_{1:n})\,.
\end{align*}
Carrying this change forward in the calculations, by a choice of $\eta=\frac{1}{2}-\frac{2}{\log_2(d-1)}$, analogously to Corollary~\ref{cor:eta-choice}, we arrive at
\[
\cC_\alpha(\cP,d)=\min(e^{\Omega(d)},\Omega((2\alpha)^H))
\,,
\]
where $\cC_\alpha(\cP,d)$ is defined for the $\alpha$-discounted setting analogously to Definition~\ref{def:query-cost} (with the difference of taking the supremum over any $q^\star$-realizable $(M,\phi)$ with discount factor $\alpha$ instead of horizon $H$). This holds for any $H\ge 1$, so due to our assumption of $\alpha\ge \frac23$, we can take the limit as $H\to\infty$, to get that
\[
\cC_\alpha(\cP,d)=e^{\Omega(d)}
\,.
\]

\section{Upper bound}\label{sec:upper}

The single-stage setting (i.e., when $H=1$) under $q^\star$-realizability 
is effectively a ``linear bandit'' problem 
\citep{LaSze19:book}. 
In this case $\delta$-good actions 
(i.e., actions $a$ with $q^\star(s,a)\ge v^\star(s)-\delta$)
can be found regardless of the cardinality of $\SA$ using $\mathcal{O}(\poly(d)/\delta^2)$ queries and computation time by simply choosing $\tilde{\mathcal{O}}(d)$ state-action pairs that provide maximal information about any other state-action pairs (via a so-called \textit{approximate $G$-optimal design}, cf. Proposition~\ref{prop:optimal-design}), followed by estimating the unknown linear parameter using a least-squares estimator. %

In this section, we consider the  ``na\"ive'' planning algorithm which treats the finite-horizon planning problem in MDPs as a sequence of $H$ single-stage problems, and applies the above procedure recursively. More specifically, the problem can be seen as as a sequence of misspecified linear bandits, with the misspecification of any given level being the estimation error of the previous level. We provide an upper bound on the query complexity of the planner which implements this algorithm -- we find that, perhaps unsurprisingly, the estimation errors compound multiplicatively over the different stages: by level $h \in [H]$, the learned hypothesis will roughly have an error of $\mathcal{O}\left((\sqrt{d})^{H-h+1}\right)$. The complementary lower bound of Section~\ref{sec:lower} tells us that in a certain sense this is the best that a planner can do (i.e., some multiplicative compounding of errors is unavoidable).

In more detail, our algorithm first estimates the optimal value functions $q^\star_H(s,a) = r_a(s)$ using a least-squares estimator and design points from $(s,a)\in \cS_H \times \cA$.
Recursively, once the estimate $f_{h+1}$ is computed, level $h$ can be treated as a single-stage problem with immediate rewards $\mu^{(h)}$ so that $\mu^{(h)}_a(s) = r_a(s) + \int \max_{a'}f_{h+1}(s',a') \cP_a(ds'|s)$. 
This gives a misspecified linear bandit, where the misspecification error is the estimation error of the preceding level,  $\norm{f_{h+1}-q^\star_{h+1}}_\infty$. 
 Readers familiar with the literature recognize that the algorithm described is known as the least-squares value iteration algorithm. 
The pseudo-code makes reference to Proposition~\ref{prop:optimal-design}, which we present after the algorithm. Below, we write $\Pi_H(x) \coloneqq \max(\min(x,H),-H)$ for the clipping operator. 
\begin{algorithm}
\begin{itemize}
\item Let $f_{H+1} \equiv 0$
 \item for $h=H$ downto $1$:
    \begin{enumerate}[(i)]
    \item Compute the experimental design $\rho_h \in \Dists( \S_h \times \A )$ satisfying the assumptions of Proposition~\ref{prop:optimal-design} 
    \item For each $z=(s,a) \in \supp(\rho_h)$, collect $n$ samples of $(R_i(z),S'_i(z)) \sim Q_a(\cdot,\cdot|s)$ and calculate the empirical average $ \hat \mu_h(s,a) = \frac{1}{n} \sum_{i=1}^n R_i(z) + \max_{a'} f_{h+1}(S'_i(z),a')$.
    \item Calculate the least-squares estimator $\hat \theta_h$ appearing in Proposition~\ref{prop:optimal-design} using $\rho=\rho_h$ and $r=\hat \mu_h$, and set $f_h(s,a) = \Pi_H\left(\langle \phi_h(s,a), \hat \theta_h \rangle\right)$.
    \end{enumerate}
\item Return $(f_h)_h$. %
\end{itemize}
\caption{Least-squares value iteration with $G$-optimal design}
\label{alg:lsvi}
\end{algorithm}

We say that $\mu:\cX \to \bR$ is an $\epsilon$\textit{-realizable linear function} for feature map $\phi' : \cX \rightarrow \bR^d$ 
if there exists a vector $\theta^\star \in \bR^d$ such that 
\[
\mu(x) = \langle \phi(x), \theta^\star \rangle + \epsilon_x, \quad |\epsilon_x| \leq \epsilon \ \forall x \in \cX .
\]
To perform the least-squares estimation, we follow the approach of \cite{LaSzeGe19} and make use of the \textit{Kiefer-Wolfowitz} theorem to sample only from a few state-action pairs per horizon. The main result we use is the following, which bounds the error of producing a least-squares estimator over a certain distribution of points with small support: 

\begin{proposition}\label{prop:optimal-design}
Assume that $\{\phi'(x)\,:\, x\in \cX \}\subset \R^d$ is compact.
There exists a distribution $\rho \in \Dists(\cX)$ whose support has at most $4d \log\log d + 16 $ points such that for any $r,\mu:\cX \to \R$,
the vector
\begin{align*}
\hat \theta = G(\rho)^{-1} \sum_{x\in \supp(\rho)} \rho(x) r(x) \varphi'(x), \quad \text{ where } G(\rho) = \sum_{x\in \supp(\rho)} \rho(x) \varphi'(x)\varphi'(x)^\top
\end{align*}
satisfies
\[
\sup_{x\in \cX} \left|\langle \phi'(x),\hat \theta\rangle-\mu(x) \right| \leq \epsilon + (\epsilon +\delta) \sqrt{2d}
\]
provided that
$\mu:\cX \to \R$ is an $\epsilon$-realizable linear function with feature map $\phi'$
and $ \|r-\mu\|_\infty\le \delta$. 
\end{proposition}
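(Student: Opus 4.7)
}

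The plan is to combine the (approximate) Kiefer--Wolfowitz theorem with a short Cauchy--Schwarz calculation controlling the prediction error of the least-squares estimator. First, I would invoke an approximate $G$-optimal design result on the compact set $\{\phi'(x) : x \in \cX\} \subset \R^d$ to obtain a distribution $\rho \in \Dists(\cX)$ whose support has at most $4d\log\log d + 16$ points and such that
\begin{align*}
\max_{x \in \cX} \phi'(x)^\top G(\rho)^{-1} \phi'(x) \le 2d.
\end{align*}
(The classical Kiefer--Wolfowitz theorem gives the bound $d$ with support size $d(d+1)/2$; the quoted smaller support at the cost of the factor $2$ is the refined Todd-style construction used in \citet{LaSzeGe19}.)

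Next, writing $\mu(x) = \langle \phi'(x), \theta^\star \rangle + \epsilon_x$ with $|\epsilon_x|\le \epsilon$ and setting $e_y := \epsilon_y + (r(y)-\mu(y))$, I would expand the definition of $\hat\theta$ to obtain the error decomposition
\begin{align*}
\hat\theta - \theta^\star = G(\rho)^{-1} \sum_{y \in \supp(\rho)} \rho(y)\, e_y\, \phi'(y),
\end{align*}
since $G(\rho)\theta^\star = \sum_y \rho(y)\langle \phi'(y),\theta^\star\rangle \phi'(y)$ appears on the right-hand side when substituting $r(y) = \langle \phi'(y),\theta^\star\rangle + e_y$. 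Then, for any fixed $x \in \cX$,
\begin{align*}
\langle \phi'(x), \hat\theta\rangle - \mu(x)
= \Bigl\langle \phi'(x),\, G(\rho)^{-1}\!\! \sum_{y \in \supp(\rho)} \rho(y)\, e_y\, \phi'(y)\Bigr\rangle - \epsilon_x.
\end{align*}

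The main step is then to bound the first term by Cauchy--Schwarz in the inner product induced by $G(\rho)^{-1}$. Letting $\tilde\phi(y) := G(\rho)^{-1/2}\phi'(y)$, one has $\sum_y \rho(y)\tilde\phi(y)\tilde\phi(y)^\top = I$, so
\begin{align*}
\Bigl|\sum_y \rho(y) e_y \langle \tilde\phi(x),\tilde\phi(y)\rangle\Bigr|
\le \sqrt{\sum_y \rho(y) e_y^2}\; \sqrt{\sum_y \rho(y)\langle \tilde\phi(x),\tilde\phi(y)\rangle^2}
\le (\epsilon+\delta)\sqrt{\phi'(x)^\top G(\rho)^{-1}\phi'(x)},
\end{align*}
where the bound on the first factor uses $\|r-\mu\|_\infty \le \delta$ together with $|\epsilon_y|\le \epsilon$, and the bound on the second factor is exactly $\phi'(x)^\top G(\rho)^{-1}\phi'(x)$ after simplification. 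Applying the $G$-optimal bound $\phi'(x)^\top G(\rho)^{-1}\phi'(x) \le 2d$ and $|\epsilon_x| \le \epsilon$ to the remaining term yields
\begin{align*}
\sup_{x \in \cX}\bigl|\langle \phi'(x),\hat\theta\rangle-\mu(x)\bigr|
\le \epsilon + (\epsilon+\delta)\sqrt{2d},
\end{align*}
which is the claimed inequality.

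The only genuine obstacle is the approximate $G$-optimal design guarantee of support size $4d\log\log d + 16$: this is not the vanilla Kiefer--Wolfowitz conclusion but rather a stronger result that trades a factor of $2$ in the variance-proxy bound for a much smaller support. I would simply cite this result (as done in the analysis of \citet{LaSzeGe19}); the rest of the proof is the clean two-line algebraic decomposition and Cauchy--Schwarz above.
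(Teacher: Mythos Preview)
Your proposal is correct. The paper's own proof is a one-line citation to Proposition~4.5 of \citet{LaSzeGe19} (with the remark that Kiefer--Wolfowitz applies to infinite $\cX$), and your argument---approximate $G$-optimal design giving $\phi'(x)^\top G(\rho)^{-1}\phi'(x)\le 2d$ on small support, followed by the error decomposition $\hat\theta-\theta^\star = G(\rho)^{-1}\sum_y \rho(y)e_y\phi'(y)$ and Cauchy--Schwarz in the $G(\rho)^{-1}$ inner product---is exactly the computation underlying that cited result.
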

The distribution $\rho$ appearing in the above proposition is called a \textit{near-optimal experimental design}. Note that the result only needs $r$ to be specified at the points in the support of $\rho$, a property that we use in our algorithm.
\begin{proof}
This is essentially Proposition~4.5 of \cite{LaSzeGe19} with the difference that here we allow infinite $\cX$. 
As the Kiefer-Wolfowitz theorem applies regardless the cardinalities of the sets involved, the result continues to hold in our case.
\end{proof}
\noindent The following result gives an upper bound on the estimation error and query complexity of this algorithm. 

\begin{proposition}\label{prop:backwards-error}
Let $f_h:\cS_h\times \cA \to \R$, for $h \in [H]$, be the functions returned by Algorithm~\ref{alg:lsvi}.
Consider the planner $\cP$ that chooses actions greedily based on $f=(f_h)$. 
Let $(M,\phi)\in\mathcal{M}_{H,d}$ be $q^\star$-realizable and bounded (Definition~\ref{def:bounded-realizable-mdp}). Then setting $n=\tilde \Theta( H^4 (2d)^H /\delta^2 )$, 
$\cP$ is a $\delta$-sound planner that uses at most
\[
\cC(\cP,H,d) = \tilde{\mathcal{O}}\left(\frac{H^5 (2d)^{H+1}}{\delta^2}\right)
\]
queries.
\end{proposition}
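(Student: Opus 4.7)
The plan is to track the sup-norm estimation error $\epsilon_h := \|f_h - q^\star_h\|_\infty$ produced by Algorithm~\ref{alg:lsvi}, establish a linear recurrence of the form $\epsilon_h \le \sqrt{2d}\,\epsilon_{h+1} + \sqrt{2d}\,\delta_n$ with terminal condition $\epsilon_{H+1}=0$, and then convert the resulting bound on $\max_h \epsilon_h$ into a bound on the suboptimality of the greedy policy $\pi$ with respect to $f=(f_h)_h$.

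At a fixed level $h\in[H]$, I would set the ``target'' $\mu^{(h)}_a(s) := r_a(s) + \int \max_{a'} f_{h+1}(s',a')\,P_a(ds'\mid s)$, and note that, since $\varphi_h$ realizes $q^\star_h$ by Assumption~\ref{ass:realizability}, the function $q^\star_h$ itself is exactly linear (that is, $0$-realizable) in $\varphi_h$. At each design point $(s,a)\in\supp(\rho_h)$, the empirical average $\hat\mu_h(s,a)$ is the mean of $n$ i.i.d.\ samples bounded in $[0,H]$ (using Assumption~\ref{ass:bounded-mdp} and the $\Pi_H$-clipping of $f_{h+1}$), so a Hoeffding bound together with a union bound over the $\tilde O(d)$ design points and $H$ levels gives $\|\hat\mu_h - \mu^{(h)}\|_{\supp(\rho_h),\infty} \le \delta_n := \tilde O(H/\sqrt{n})$ with high probability. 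Combined with the inductive bound $\|\mu^{(h)} - q^\star_h\|_\infty \le \epsilon_{h+1}$, which follows from $1$-Lipschitzness of the max and the inductive hypothesis on $f_{h+1}$, this yields $\|\hat\mu_h - q^\star_h\|_{\supp(\rho_h),\infty} \le \delta_n + \epsilon_{h+1}$. Invoking Proposition~\ref{prop:optimal-design} with target $q^\star_h$ (misspecification $0$) and data deviation $\delta_n+\epsilon_{h+1}$ then produces $\sup_{s,a}|\ip{\varphi_h(s,a),\hat\theta_h} - q^\star_h(s,a)| \le \sqrt{2d}\,(\epsilon_{h+1}+\delta_n)$. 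Since clipping against $[-H,H]$ can only shrink distances to $q^\star_h\in[0,H]$, the same inequality holds for $f_h$ in place of $\ip{\varphi_h(\cdot),\hat\theta_h}$, closing the recurrence.

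Unrolling the recurrence geometrically gives $\max_h \epsilon_h = \epsilon_1 = O\!\big((2d)^{H/2}\delta_n\big)$. A standard simulation-lemma / performance-difference argument, which writes $v^\star_h(s)-v^\pi_h(s)$ as an expectation under $\pi$ of one-step action gaps and bounds each gap by $2\epsilon_t$ using the greediness of $\pi$ with respect to $f_t$, then produces $\delta^\pi \le 2\sum_{h=1}^H \epsilon_h \le 2H\epsilon_1$. Demanding this to be at most $\delta$ forces $\delta_n = \Theta\!\big(\delta/(H\,(2d)^{H/2})\big)$, which by the Hoeffding calibration corresponds to $n = \tilde\Theta\!\big(H^4 (2d)^H / \delta^2\big)$. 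Multiplying by $\tilde O(d)$ design points per level and by the $H$ levels yields the advertised query budget $\tilde O\!\big(H^5 (2d)^{H+1}/\delta^2\big)$.

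The main obstacle is handling the multiplicative compounding of errors across horizons. A level-by-level analysis that naively applies an exactly-realizable least-squares bound but ignores the estimation error inherited from level $h+1$ would yield only additive error growth and miss the $(2d)^{H/2}$ factor that the lower bound of Section~\ref{sec:lower} shows to be unavoidable. The crucial step is therefore to feed $\epsilon_{h+1}$ into Proposition~\ref{prop:optimal-design} as either misspecification or additional data noise at every backwards step, so that each recursion contributes the factor $\sqrt{2d}$. Everything else---the Hoeffding concentration, the union bound across horizons, and the conversion from $\|f-q^\star\|_\infty$ to policy suboptimality---is standard, provided one checks that $\hat\mu_h$ stays in $[0,H]$ so that the Hoeffding constants are correct, which follows from the reward bound in Assumption~\ref{ass:bounded-mdp} together with the $\Pi_H$ clipping.
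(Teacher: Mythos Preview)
Your proposal is correct and follows essentially the same route as the paper: Hoeffding concentration at the $G$-optimal design points, a backward recursion on $\epsilon_h=\|f_h-q^\star_h\|_\infty$ driven by Proposition~\ref{prop:optimal-design}, and the finite-horizon Singh--Yee conversion (Lemma~\ref{lemma:finite-singh}) from $\max_h\epsilon_h$ to policy suboptimality. The only substantive difference is bookkeeping in how Proposition~\ref{prop:optimal-design} is invoked: the paper takes $\mu=\mu_h$ with misspecification $\epsilon=\epsilon_{h+1}$ and data noise $\delta=\beta$, obtaining the recursion $\epsilon_h\le(2+\sqrt{2d})\epsilon_{h+1}+\sqrt{2d}\,\beta$, whereas you take $\mu=q^\star_h$ with $\epsilon=0$ and absorb $\epsilon_{h+1}$ into the data noise, obtaining the slightly tighter $\epsilon_h\le\sqrt{2d}(\epsilon_{h+1}+\delta_n)$. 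Both yield the same $\tilde O((2d)^{H/2})$ blow-up. One small point you should make explicit: $\delta$-soundness is a deterministic guarantee, so you still need to account for the Hoeffding failure event of probability $\zeta$ (on which the greedy policy can be at most $H$-suboptimal), set $\zeta=\Theta(\delta/H)$, and add the resulting $H\zeta$ term before solving for $n$; the paper does this in its last paragraph.
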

In fact, the proof shows that the planner is not only a local $\delta$-sound planner, but it is also a \textbf{global} $\delta$-sound planner. That is, no matter the state, after the said number of queries, the planner can output the actions of a $\delta$-optimal policy without any further interactions with the simulator (and the cost of this computation is $\mathcal{O}(d|\cA|)$). We also note that the upper bound is exponential in $H$, as is the second term of the lower bound. Still there remains a gap between the lower and upper bounds.

For the proof, we will need a finite-horizon analogue of Corollary~2 from \cite{SinghYee94}. For any $f: \SA \rightarrow \bR$, let $\pi_{f}(s) = \argmax_{a} f(s,a)$ denote a policy that is greedy with respect to $f$.
\begin{lemma}\label{lemma:finite-singh}
Given any $(f_h)_h$ with $f_h:\cS_h \times \cA \to \R$, the $H$-stage policy $\pi$ that at stage $h\in [H]$ is greedy with respect to $f_h$ ($\pi_h = \pi_{f_h}$) satisfies
$\max_h \norm{v_h^\star-v_h^{\pi}}_\infty \leq 2 \sum_{h=1}^H\norm{q^\star_h-f_h}_\infty$.
\end{lemma}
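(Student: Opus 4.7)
The plan is to follow the classical telescoping argument, adapting it to the finite-horizon setting where the value functions live on the level sets $\cS_h$. Let $\epsilon_h := \|q^\star_h - f_h\|_\infty$ and let $\Delta_h := \|v^\star_h - v^\pi_h\|_\infty = \sup_{s \in \cS_h}(v^\star_h(s)-v^\pi_h(s))$, where the last equality uses that $v^\star_h \geq v^\pi_h$ pointwise (optimality). The goal is to establish the recursion $\Delta_h \leq 2\epsilon_h + \Delta_{h+1}$, with the base case $\Delta_{H+1}=0$ following from the convention $v^\star_{H+1} \equiv v^\pi_{H+1} \equiv 0$. Unrolling this recursion yields $\Delta_1 \leq 2\sum_{h=1}^H \epsilon_h$, and since $\Delta_h$ is non-increasing as we descend deeper is not quite true, we bound $\max_h \Delta_h$ by $\Delta_1$ --- more carefully, unrolling gives $\Delta_h \leq 2\sum_{h'=h}^H \epsilon_{h'} \leq 2\sum_{h'=1}^H \epsilon_{h'}$, so the bound holds for the maximum as well.

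The first key step is what is sometimes called the ``greedy action is nearly optimal for $q^\star$'' lemma: for any $s \in \cS_h$, writing $a^\star = \argmax_a q^\star_h(s,a)$, the greediness of $\pi_h$ with respect to $f_h$ gives $f_h(s, \pi_h(s)) \geq f_h(s, a^\star)$, so
\begin{align*}
q^\star_h(s, \pi_h(s))
  &\geq f_h(s, \pi_h(s)) - \epsilon_h
   \geq f_h(s, a^\star) - \epsilon_h
   \geq q^\star_h(s, a^\star) - 2\epsilon_h
   = v^\star_h(s) - 2\epsilon_h.
\end{align*}
This implies $v^\star_h(s) - q^\star_h(s,\pi_h(s)) \leq 2\epsilon_h$, the standard ``double-$\epsilon$'' bound on the action gap of the greedy action.

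The second step is to decompose the suboptimality through the Bellman equations. Using \eqref{eq:bellman1} for $q^\star_h$ at the action $\pi_h(s)$ together with the Bellman equation for $v^\pi_h$ under the deterministic policy $\pi$, we obtain
\begin{align*}
v^\star_h(s) - v^\pi_h(s)
 &= \bigl[v^\star_h(s) - q^\star_h(s, \pi_h(s))\bigr] + \int \bigl[v^\star_{h+1}(s') - v^\pi_{h+1}(s')\bigr]\, P_{\pi_h(s)}(ds' \mid s) \\
 &\leq 2\epsilon_h + \Delta_{h+1},
\end{align*}
where the first bracket is controlled by the previous paragraph and the integral is bounded by $\Delta_{h+1}$ since the integrand is non-negative and the successor states lie in $\cS_{h+1}$. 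Taking the supremum over $s \in \cS_h$ yields $\Delta_h \leq 2\epsilon_h + \Delta_{h+1}$, completing the recursion.

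There is no real obstacle here; the one subtlety worth attention is making sure the level-indexed domains line up (the Bellman recursion routes states from $\cS_h$ into $\cS_{h+1}$, and the boundary convention $v^\star_{H+1} \equiv v^\pi_{H+1} \equiv 0$ cleanly terminates the recursion), after which the final telescoping is a one-line induction.
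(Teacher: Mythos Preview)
Your proposal is correct and follows essentially the same route as the paper: both establish the recursion $\|v^\star_h-v^\pi_h\|_\infty \le 2\|q^\star_h-f_h\|_\infty + \|v^\star_{h+1}-v^\pi_{h+1}\|_\infty$ by combining the ``greedy action is $2\epsilon_h$-near-optimal for $q^\star_h$'' inequality with the Bellman decomposition, and then unroll. Your write-up is in fact slightly more complete than the paper's (which only spells out the $h=1$ step and defers the rest to ``an inductive argument''), since you explicitly handle the $\max_h$ via $\Delta_h \le 2\sum_{h'\ge h}\epsilon_{h'}$.
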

\begin{proof}
Let $s \in \S_1$ and let $\pi^\star$ be an optimal policy.
From the Bellman equations for $\pi^\star$ and $\pi$,
\begin{align*}
    v^{\star}_1(s) - v^{\pi}_1(s) 
    &= q^\star_1(s,\pi^\star) - q^\star_1(s,\pi_{f_1}) + q^\star_1(s,\pi_{f_1}) - q^\pi_1(s,\pi_{f_1}) \\
    &= q^\star_1(s,\pi^\star) - q^\star_1(s,\pi_{f_1}) + \ip{P_{\pi_{f_1}}(s), v_2^\star-v_2^\pi} \\
    &\leq q^\star_1(s,\pi^\star) - f_1(s,\pi^\star)+ f_1(s,\pi_{f_1})- q_1^\star(s,\pi_{f_1}) + \ip{P_{\pi_{f_1}}(s), v_2^\star-v_2^\pi} \\
    &\leq 2\norm{q_1^\star-f_1}_\infty + \norm{v_2^\star - v_2^{\pi}}_\infty\,,
\end{align*}
where the first inequality follows from that $\pi_{f_1}$ is greedy with respect to $f_1$.
An inductive argument completes the proof. 
\end{proof}
We also need Hoeffding's inequality:
\begin{lemma}[Hoeffding's inequality] 
\label{lem:hoeffding}
Let $(X_i)_{i\in [m]}$ be i.i.d. random variables from the unit interval, $[0,1]$.
Letting $\mu$ denote their common mean, 
for any $\zeta\in (0,1]$ it holds with probability $1-\zeta$ that
\begin{align*}
\absg{\frac1m \sum_{i=1}^m X_i - \mu }\le \sqrt{\frac{\log(2/\zeta)}{2m}}\,.
\end{align*}
\end{lemma}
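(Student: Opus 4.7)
The plan is to prove the two-sided concentration inequality via the classical Chernoff-method argument, reducing the bound on $|\bar X_m - \mu|$ (where $\bar X_m = \frac{1}{m} \sum_{i=1}^m X_i$) to a one-sided tail bound via a union bound, and controlling the one-sided tail through the moment generating function of the centered variables. Specifically, for any $\lambda > 0$ and any $\epsilon > 0$, Markov's inequality applied to $e^{\lambda \sum_i (X_i - \mu)}$ yields
\[
\bbP\left( \bar X_m - \mu \ge \epsilon \right) \le e^{-\lambda m \epsilon} \prod_{i=1}^m \E\!\left[ e^{\lambda (X_i - \mu)} \right],
\]
where the product factorization uses the i.i.d.\ assumption. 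The same bound applies to $\bbP(\mu - \bar X_m \ge \epsilon)$ with $-\lambda$ replaced by $\lambda$, using the fact that $-(X_i - \mu)$ also has mean zero and takes values in an interval of length $1$.

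The main obstacle is to establish Hoeffding's lemma for the centered variables: if $Y$ is a random variable with $\E[Y] = 0$ and $Y \in [a,b]$ almost surely, then $\E[e^{\lambda Y}] \le \exp(\lambda^2 (b-a)^2 / 8)$. The standard approach is to use convexity of $x \mapsto e^{\lambda x}$ to write $e^{\lambda y} \le \frac{b-y}{b-a} e^{\lambda a} + \frac{y-a}{b-a} e^{\lambda b}$ for $y \in [a,b]$, take expectations to obtain a deterministic function of $\lambda$, and then bound this function by $e^{\lambda^2 (b-a)^2/8}$ by analyzing the logarithm of the resulting expression: defining $\psi(u) = \log(\frac{b}{b-a} - \frac{a}{b-a} e^u)$ (after a change of variable $u = \lambda(b-a)$) and checking that $\psi(0) = \psi'(0) = 0$ and $\psi''(u) \le 1/4$, one obtains the claim by a Taylor expansion with remainder. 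Applying this with $Y_i = X_i - \mu \in [-\mu, 1-\mu]$ so that $b - a = 1$, we get $\E[e^{\lambda(X_i - \mu)}] \le e^{\lambda^2/8}$.

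Plugging back in, the one-sided tail becomes $\bbP(\bar X_m - \mu \ge \epsilon) \le \exp(-\lambda m \epsilon + m \lambda^2 /8)$. Optimizing over $\lambda > 0$ yields $\lambda = 4\epsilon$, giving $\bbP(\bar X_m - \mu \ge \epsilon) \le e^{-2 m \epsilon^2}$. The same argument applied to $-X_i$ gives the matching lower tail, and a union bound yields $\bbP(|\bar X_m - \mu| \ge \epsilon) \le 2 e^{-2m \epsilon^2}$.

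Finally, to convert this tail bound into the form stated in the lemma, I would set $\zeta = 2 e^{-2 m \epsilon^2}$ and solve for $\epsilon$: this gives $\epsilon = \sqrt{\log(2/\zeta) / (2m)}$. Taking the complementary event then yields the stated bound with probability at least $1 - \zeta$. The only subtle point worth flagging is the Hoeffding-lemma calculation with $\psi''$, which is the classical source of the constant $1/8$; everything else is routine.
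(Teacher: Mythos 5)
The paper states this lemma as a classical result and gives no proof of it, so there is nothing to compare against on the paper's side; your task here is simply to supply the standard derivation, which you do correctly. Your Chernoff-method argument is the textbook proof: the MGF bound via Hoeffding's lemma with $b-a=1$ gives the factor $e^{\lambda^2/8}$ per variable, the optimization $\lambda=4\epsilon$ yields the one-sided tail $e^{-2m\epsilon^2}$, the union bound doubles it, and inverting $\zeta=2e^{-2m\epsilon^2}$ gives exactly the stated radius $\sqrt{\log(2/\zeta)/(2m)}$. One small correction in your sketch of Hoeffding's lemma: with $p=-a/(b-a)$, the function whose second derivative you bound by $1/4$ should be $\psi(u)=-pu+\log(1-p+pe^{u})$, not $\log\bigl(\tfrac{b}{b-a}-\tfrac{a}{b-a}e^{u}\bigr)$ alone --- without the linear term $-pu$ you do not get $\psi'(0)=0$. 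With that fix the argument is complete.
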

\begin{proof}(of Proposition~\ref{prop:backwards-error}).
 We first derive an error bound on how well the optimal action-value function is approximated by the functions computed. 
 Let $m=\lceil 4d \log\log d + 16 \rceil$ be the maximum cardinality of the support of $\rho_h$.
Recall that $f_{H+1}=0$.
For $h\in [H+1]$, let $g_{h}(s) = \max_a f_h(s,a)$ and
\begin{align*}
 \mu_h(s,a) &= r_a(s) + \ip{P_a(s),g_{h+1}}\,.
\end{align*}
Fix $h\in [H]$, $z = (s,a)\in \supp(\rho_h)$,
\begin{align*}
\hat \mu_h(s,a) - \mu_h(s,a) = 
\frac1n \sum_{i=1}^n \underbrace{R_i(z)-r_a(s) + g_{h+1}(S'_i(z))-\ip{P_a(s),g_{h+1}}}_{X_i}\,.
\end{align*}
 Note that $X_i \in [-H, H]$ as rewards are positive (Assumption~\ref{ass:bounded-mdp}), and the output of $f_h$ is clipped at $[-H,H]$ (cf. Step (iii) of Algorithm~\ref{alg:lsvi}). Also note that $X_i$ has mean zero. Indeed,
 \begin{align*}
\EE{X_i} =\EEg{ \EE{ X_i | g_{h+1} } }= \EE{R_i(z)-r_a(s)} + 
\EEg{ \EE{ g_{h+1}(S'_i(z))-\ip{P_a(s),g_{h+1}} | g_{h+1} }} = 0\,.
\end{align*}
Hence, by a union bound and Hoeffding's inequality,
 with probability at least $1-\zeta$,
 we have 
 \begin{equation}\label{eq:beta}
 |\hat \mu_h(s,a) - \mu_h (s,a) | \leq H \sqrt{\frac{2}{n}\log\left(\frac{2H m}{\zeta}\right)}\coloneqq \beta \quad \forall h \in [H], \forall (s,a) \in \supp(\rho_h).
 \end{equation}
In the rest of the proof we work on the event where the inequalities in the above display hold (i.e., the inequalities below hold with probability at least $1-\zeta$).
 By backwards induction, we will show that the error at level $h$ satisfies
 \begin{equation}\label{eq:backwards-error}
 \norm{f_{h}-q^\star_{h} }_\infty \leq \beta \left((2+\sqrt{2d})^{H-h+1} - 1 \right) \coloneqq \epsilon_{h} 
 \end{equation}
 Starting with stage %
 $H$, by realizability of $(M,\phi)$, we can apply Proposition~\ref{prop:optimal-design} with $\delta=\beta$, $\epsilon=0$, $\mu=r=q^\star_H$, and $\phi'=\phi_H$ to get
\[
 \norm{f_H - q^\star_H}_\infty \leq \beta \sqrt{2d},
\]
 as desired.
Now let $1\le h < H$ and assume the claim is true for level $h+1$; 
we  will now show that it is also true for level $h$. 
We claim that $\mu_h$ is $\epsilon_{h+1}$-realizable under $\phi_h$.
Indeed, from the definition of $\mu_h$, the realizability of $q_h^\star$ and the Bellman equation,
\begin{align*}
\abs{\mu_h(s,a)-q_h^\star(s,a)} &= 
\absg{\int P_a(ds'|s) \left\{\max_{a'} f_{h+1}(s',a') - \max_{a'} q_{h+1}^\star(s',a') \right\}} \\
&\le \norm{f_{h+1} - q_{h+1}^\star}_\infty \le  \epsilon_{h+1}\,,
\end{align*}
where the last inequality follows by the induction hypothesis.
Applying Proposition~\ref{prop:optimal-design} again, with $\epsilon=\epsilon_{h+1}$,
$\delta=\beta$,
$\mu=\mu_h$, 
and $\phi'=\phi_h$
 gives:
\begin{align*}
\norm{f_h - q^\star_h}_\infty 
& \le \norm{f_h - \mu_h}_\infty +\epsilon_{h+1} \\
& \le \epsilon_{h+1}(2+\sqrt{2d}) + \beta\sqrt{2d} \\
&= \beta \left((2+\sqrt{2d})^{H-h} - 1 \right)(2 + \sqrt{2d}) + \beta \sqrt{2d} \\
&= \beta \left((2+\sqrt{2d})^{H-h+1} - 1\right), 
\end{align*}
as desired. 
Letting $\pi$ be the policy implemented by the planner, we bound its suboptimality using  Lemma~\ref{lemma:finite-singh}: 
$
\norm{v^\star_1-v^{\pi}_1}_\infty \leq 2H\norm{q^\star_1-f_1}_\infty = 2H\beta\left((2+\sqrt{2d})^{H}-1)\right).%
$
By boundedness of rewards (Assumption~\ref{ass:bounded-mdp}), on the error event that has probability $\zeta$ at most, the policy computed incurs a 
total cost of at most $H$ compared to the optimal policy, %
it holds that $\pi$ is at least $2H\beta\left((2+\sqrt{2d})^{H}-1\right) + H\zeta$-optimal.
Setting $\zeta = \delta/2$ and setting $n$ so that $2H\beta\left((2+\sqrt{2d})^{H}-1\right)\le \delta/2$ also holds, we get that with 
\begin{align*}
n = \left\lceil
32 H^4 \log\left(\frac{4H m}{\delta}\right) \left((2+\sqrt{2d})^{H}-1\right)^2 \delta^{-2} \right\rceil
\end{align*}
queries per the $mH$ support points, the planner $\cP$ is $\delta$-sound.
Plugging in the value of $m$ and simplifying gives the desired result.
\end{proof}
As a side note we remark that the proof avoids a union bound over the value functions. This works because the random next states that are used in the random value function computed are chosen independently of the random value function.

Note that not accounting for the computation of the distributions $(\rho_h)_h$, the total computation cost of a naive implementation of the planner for a single stage $h\in [H]$ 
is $\mathcal{O}(n m |\cA| d + m d + d^3) = \tilde{\mathcal{O}}(H^4 d^2 |\cA| + d^3)$, making the total compute cost
$\tilde{\mathcal{O}}(H^5 d^2 |\cA| + H d^3)$. When a quadratic optimization oracle is available over the set 
$\{\phi_h(s,a)\,:\, (s,a)\in \cS_h \times \cA\}$, a version of the Frank-Wolfe algorithm can be used to approximately compute $\rho_h$ in $\text{poly}(d)$-time
\citep[Notes 3-4, Chapter 21.2]{LaSze19:book}.

\section{Discussion}  
\label{sec:disc}
In this paper we have shown an exponential lower bound on the query complexity 
of local planning with linear function approximation in both fixed-horizon and discounted MDPs, under the assumption that the optimal action-value function is realizable by the features. We have also given an upper bound for the fixed-horizon setting, which, in some regimes of the parameters 
is relatively close to the lower bound.
Closing the gap between the lower and upper bounds remains an interesting open problem.
Since the upper bound applies to global planning and not only local planning, there is a possibility that the query complexity of local and global planning in this specific setting are the same. It would also be interesting to refute or validate this conjecture.

Our lower bound construction critically relied on allowing the action set to be exponentially large in the dimension.  %
We remark that the large action set does not preclude a polynomial sample complexity -- in particular, \citet{LaSzeGe19} give a polynomial upper bound regardless of the number of actions, under the assumption that the state-action value function $q^\pi$ for any memoryless policy $\pi$ is realizable. 
Note that in the single-stage/linear bandit setting (with $H=1$), this assumption becomes equivalent to the realizability of $q^\star$. However, for $H>1$, our exponential sample complexity lower bound shows a strong separation between these two realizability assumptions.
The difficulty of planning (and learning) thus cannot be attributed to the large action set alone.
Yet, it is intriguing that our construction relies heavily on the action set being large: it remains an interesting open question to resolve the worst-case query complexity of $q^\star$-realizable planning with linear function approximation when the action set is of constant size. %

\section{Acknowledgements}

We thank Nan Jiang for insightful discussions, and Roshan Shariff for helpful feedback on an earlier draft. PA gratefully acknowledges funding from the Natural Sciences and Engineering Research Council (NSERC). 
CS gratefully acknowledges funding  from 
the Canada CIFAR AI Chairs Program, Amii and NSERC.

\bibliography{linear_fa}

\begin{thebibliography}{19}
\providecommand{\natexlab}[1]{#1}
\providecommand{\url}[1]{\texttt{#1}}
\expandafter\ifx\csname urlstyle\endcsname\relax
  \providecommand{\doi}[1]{doi: #1}\else
  \providecommand{\doi}{doi: \begingroup \urlstyle{rm}\Url}\fi

\bibitem[Bellman et~al.(1963)Bellman, Kalaba, and Kotkin]{BeKaKo63}
R.~Bellman, R.~Kalaba, and B.~Kotkin.
\newblock Polynomial approximation -- a new computational technique in dynamic
  programming: Allocation processes.
\newblock \emph{Mathematics of Computation}, 17\penalty0 (8):\penalty0
  155--161, 1963.

\bibitem[Dasgupta and Gupta(2003)]{dasgupta2003elementary}
Sanjoy Dasgupta and Anupam Gupta.
\newblock An elementary proof of a theorem of {J}ohnson and {L}indenstrauss.
\newblock \emph{Random Structures \& Algorithms}, 22\penalty0 (1):\penalty0
  60--65, 2003.

\bibitem[Du et~al.(2019{\natexlab{a}})Du, Kakade, Wang, and
  Yang]{Du_Kakade_Wang_Yan_2019}
Simon~S Du, Sham~M Kakade, Ruosong Wang, and Lin~F Yang.
\newblock Is a good representation sufficient for sample efficient
  reinforcement learning?
\newblock In \emph{International Conference on Learning Representations},
  2019{\natexlab{a}}.

\bibitem[Du et~al.(2019{\natexlab{b}})Du, Luo, Wang, and
  Zhang]{Du_Luo_Wang_Zhang_2019}
Simon~S Du, Yuping Luo, Ruosong Wang, and Hanrui Zhang.
\newblock Provably efficient {$Q$}-learning with function approximation via
  distribution shift error checking oracle.
\newblock In \emph{Advances in Neural Information Processing Systems}, pages
  8060--8070, 2019{\natexlab{b}}.

\bibitem[Ionescu~Tulcea(1949)]{Ion49}
C.~Ionescu~Tulcea.
\newblock Mesures dans les espaces produits.
\newblock \emph{Atti Accademia Nazionale Lincei Rend}, 7:\penalty0 208--211,
  1949.

\bibitem[Jiang et~al.(2017)Jiang, Krishnamurthy, Agarwal, Langford, and
  Schapire]{Jiang_Krishnamurthy_Agarwal_Langford_Schapire_2017}
Nan Jiang, Akshay Krishnamurthy, Alekh Agarwal, John Langford, and Robert~E
  Schapire.
\newblock Contextual decision processes with low {B}ellman rank are
  {PAC}-learnable.
\newblock In \emph{International Conference on Machine Learning}, pages
  1704--1713. PMLR, 2017.

\bibitem[Jin et~al.(2020)Jin, Yang, Wang, and
  Jordan]{Jin_Yang_Wang_Jordan_2019}
Chi Jin, Zhuoran Yang, Zhaoran Wang, and Michael~I Jordan.
\newblock Provably efficient reinforcement learning with linear function
  approximation.
\newblock In \emph{Conference on Learning Theory}, pages 2137--2143, 2020.

\bibitem[Johnson and Lindenstrauss(1984)]{johnson1984extensions}
William~B Johnson and Joram Lindenstrauss.
\newblock Extensions of {L}ipschitz mappings into a {H}ilbert space.
\newblock \emph{Contemporary mathematics}, 26\penalty0 (189-206):\penalty0 1,
  1984.

\bibitem[Kallenberg(2002)]{Kal06}
O.~Kallenberg.
\newblock \emph{Foundations of modern probability}.
\newblock Springer-Verlag, 2002.

\bibitem[Kearns et~al.(2002)Kearns, Mansour, and Ng]{kearns2002sparse}
Michael Kearns, Yishay Mansour, and Andrew~Y Ng.
\newblock A sparse sampling algorithm for near-optimal planning in large
  {M}arkov decision processes.
\newblock \emph{Machine Learning}, 49:\penalty0 193--208, 2002.

\bibitem[Lattimore and Szepesv{\'a}ri(2020)]{LaSze19:book}
T.~Lattimore and Cs. Szepesv{\'a}ri.
\newblock \emph{Bandit Algorithms}.
\newblock Cambridge University Press, 2020.

\bibitem[Lattimore et~al.(2020)Lattimore, Szepesv\'ari, and Weisz]{LaSzeGe19}
Tor Lattimore, {\relax Cs}aba Szepesv\'ari, and Gell\'ert Weisz.
\newblock Learning with good feature representations in bandits and in {RL}
  with a generative model.
\newblock In \emph{ICML}, pages 9464--9472, 2020.

\bibitem[Puterman(1994)]{Put94}
Martin~L. Puterman.
\newblock \emph{Markov Decision Processes: Discrete Stochastic Dynamic
  Programming}.
\newblock Wiley-Interscience, 1994.

\bibitem[Schweitzer and Seidmann(1985)]{SchSei85}
Paul~J. Schweitzer and Abraham Seidmann.
\newblock Generalized polynomial approximations in {{Markovian}} decision
  processes.
\newblock \emph{Journal of Mathematical Analysis and Applications},
  110\penalty0 (2):\penalty0 568--582, September 1985.

\bibitem[Shariff and Szepesv{\'a}ri(2020)]{Sharriff_Szepesvari_2020}
Roshan Shariff and Csaba Szepesv{\'a}ri.
\newblock Efficient planning in large {MDP}s with weak linear function
  approximation.
\newblock \emph{arXiv preprint arXiv:2007.06184}, 2020.

\bibitem[Singh and Yee(1994)]{SinghYee94}
S.~P. Singh and R.~C. Yee.
\newblock An upper bound on the loss from approximate optimal-value functions.
\newblock \emph{Machine Learning}, 16\penalty0 (3):\penalty0 227--233, 1994.

\bibitem[Wang et~al.(2020)Wang, Foster, and Kakade]{wang2020statistical}
Ruosong Wang, Dean~P. Foster, and Sham~M. Kakade.
\newblock What are the statistical limits of offline rl with linear function
  approximation?, 2020.

\bibitem[Wen and Roy(2013)]{Wen_Roy_2013}
Zheng Wen and Benjamin~Van Roy.
\newblock Efficient exploration and value function generalization in
  deterministic systems.
\newblock In \emph{Advances in Neural Information Processing Systems}, pages
  3021--3029, 2013.

\bibitem[Yang and Wang(2019)]{YW19}
Lin Yang and Mengdi Wang.
\newblock Sample-optimal parametric $q$-learning using linearly additive
  features.
\newblock In \emph{ICML}, pages 6995--7004, 2019.

\end{thebibliography}

\end{document}